\renewcommand{\Pr}{\field{P}}
\newcommand{\bta}{\boldsymbol{\eta}}
\newcommand{\bg}{\boldsymbol{g}}
\newcommand{\bx}{\boldsymbol{x}}
\newcommand{\bu}{\boldsymbol{u}}
\newcommand{\by}{\boldsymbol{y}}
\newcommand{\field}[1]{\mathbb{#1}}
\newcommand{\R}{\field{R}}
\newcommand{\E}{\field{E}}
\newtheorem{lemma}{Lemma}
\newtheorem{theorem}{Theorem}
\newtheorem{example}{Example}
\begin{document}

\runningtitle{On the Convergence of Stochastic Gradient Descent with Adaptive Stepsizes}

\twocolumn[

\aistatstitle{On the Convergence of Stochastic Gradient Descent\\with Adaptive Stepsizes}

\aistatsauthor{ Xiaoyu Li \And Francesco Orabona}

\aistatsaddress{ Boston University \And  Boston University } ]

\begin{abstract}
Stochastic gradient descent is the method of choice for large scale optimization of machine learning objective functions. Yet, its performance is greatly variable and heavily depends on the choice of the stepsizes. This has motivated a large body of research on adaptive stepsizes. However, there is currently a gap in our theoretical understanding of these methods, especially in the non-convex setting. In this paper, we start closing this gap: we theoretically analyze in the convex and non-convex settings a generalized version of the AdaGrad stepsizes. We show sufficient conditions for these stepsizes to achieve almost sure asymptotic convergence of the gradients to zero, proving the first guarantee for generalized AdaGrad stepsizes in the non-convex setting.
Moreover, we show that these stepsizes allow to automatically adapt to the level of noise of the stochastic gradients in both the convex and non-convex settings, interpolating between $O(1/T)$ and $O(1/\sqrt{T})$, up to logarithmic terms.
\end{abstract}
 
\section{INTRODUCTION}
In recent years, Stochastic Gradient Descent (SGD) has become the tool of choice to train machine learning models. In particular, in the Deep Learning community, it is widely used to minimize the training error of deep networks. In this setting, the stochasticity arises from the use of so-called \emph{mini-batches}, that allows to keep the complexity per iteration constant with respect to the size of the training set.

More in details, SGD iteratively updates the solution as $\bx_{t+1}=\bx_t-\eta_t \bg(\bx_t,\xi_t)$, starting from an arbitrary point $\bx_1$, where $\bg(\bx_t,\xi_t)$ is a stochastic gradient in $\bx_t$ that depends on the stochastic variable $\xi_t$.Classic convergence analysis of the SGD algorithm for non-convex smooth functions relies on conditions on the positive stepsizes $\eta_t$~\citep{RobbinsM51}. In particular, sufficient conditions are that $(\eta_t)_{t=1}^\infty$ is a deterministic sequence of non-negative numbers that satisfies
\begin{align}
\label{eq:conditions_stepsize}
\sum_{t=1}^\infty \eta_t = \infty \quad\text{ and }\quad \sum_{t=1}^\infty \eta_t^2 < \infty.
\end{align}

However, state-of-the-art SGD variants use \emph{adaptive stepsizes}, that is $\eta_t$ is a function of past stochastic gradients.
These stepsizes are believed to require less tweaking to achieve good performance in machine learning applications and we have some partial explanations in the convex setting, i.e. sparsity of the gradients~\citep{DuchiHS11}. However, in the non-convex setting, we do not have any theory explaining the better performance.

Indeed, for a large number of SGD variants employed by practitioners the conditions above are not satisfied. In fact, these algorithms are often designed and analyzed for the convex domain under restrictive conditions, e.g. bounded domains, or they do not provide convergence guarantees at all, \citep[e.g.][]{Zeiler12}, or even worse they are known to fail to converge on simple one-dimensional convex stochastic optimization problems~\citep{ReddiKK18}. Even considering an \emph{infinite} number of iterations, the behavior of these algorithms is often unknown.

We focus on a generalized version of the adaptive stepsizes popularized by AdaGrad~\citep{DuchiHS11}. This kind of stepsizes has become the basis of all other adaptive optimization algorithms used in machine learning, \citep[e.g.][]{Zeiler12,TielemanH12,KingmaB15,ReddiKK18}.
We analyze two types of step size: a global step size
\begin{equation}
\label{eq:eta}
\eta_t=\frac{\alpha}{\left(\beta+ \sum_{i=1}^{t-1} \|\bg(\bx_i,\xi_i)\|^2\right)^{\nicefrac{1}{2}+\epsilon}}
\end{equation}
and a coordinate-wise one
\begin{equation}
\label{eq:eta2}
\eta_{t,j}=\frac{\alpha}{\left(\beta+ \sum_{i=1}^{t-1} g(\bx_i,\xi_i)_j^2\right)^{\nicefrac{1}{2}+\epsilon}}, j=1, \cdots,d
\end{equation}
where $\alpha>0$ and $\beta, \epsilon\geq0$. Note that, with $\epsilon=0$, \eqref{eq:eta2} are the coordinate-wise stepsizes used in AdaGrad~\citep{DuchiHS11}, while \eqref{eq:eta} have been used in online convex optimization to achieve adaptive regret guarantees, \citep[e.g.][]{RakhlinS13,OrabonaP18}. The additional parameter $\epsilon$ allows us to increase the decrease rate of the stepsize and it will be critical to obtain our almost sure convergence results.

In this paper, we want to answer two basic questions: 1) Are there conditions under which the generalized AdaGrad stepsize converge almost surely with an infinite number of iterations in the non-convex setting? 2) Are there conditions under which the rate is better than the one of the plain SGD with decreasing stepsizes?

We answer positively to both questions. More in details, the contributions of this paper are the following:
\begin{itemize}
\item In Section~\ref{sec:almost_sure}, we prove for the \emph{first} time in the non-convex setting almost sure asymptotic convergence to zero of the gradients of SGD with both coordinate-wise and global adaptive stepsizes.
\item In Section~\ref{sec:convex}, we prove that in the convex setting the generalized global AdaGrad stepsizes adapts to the noise level, through a finite-time convergence rate. In particular, we show that, depending on the noise level, SGD with the generalized AdaGrad updates automatically interpolates between the convergence rates of Gradient Descent (GD) and SGD, up to polylogarithmic terms. We do so \emph{removing the strong assumptions} present in previous analyses.
\item In Section~\ref{sec:adapt}, similarly to the results of Section~\ref{sec:convex}, we show that in the non-convex setting the generalized global AdaGrad stepsizes adapts to the noise level, through a novel finite-time convergence rate. A low noise will result in an automatic faster convergence. As far as we know, these are the \emph{first} theoretical results for the advantage of AdaGrad-like stepsizes over the plain SGD in the non-convex setting.
\end{itemize}

The next Section discusses more in details the related work, while Section~\ref{sec:def} introduces formally the setting, and Section~\ref{sec:stepsize} discusses the details of the adaptive stepsizes considered in this work.

\section{RELATED WORK}

In the convex setting, adaptive stepsizes have a long history. They were first proposed in the online learning literature~\citep{AuerCG02} and adopted into the stochastic optimization one later~\citep{DuchiHS11}. In particular, in \citep{DuchiHS11} they prove that AdaGrad can converge faster if the gradients are sparse and the function is convex. Yet, most of these studies assumed the optimization to be constrained in a convex bounded set. This assumption is often false in many applications of optimization for machine learning. \citet{YousefianNS12} analyze different adaptive stepsizes, but only for strongly convex optimization. 
Recently, \citet{WuWB18} have analyzed a choice of adaptive stepsizes similar to the global stepsizes we consider, but their result in the convex setting requires the norm of the gradients strictly greater than zero. \citet{LevyYC18} propose an acceleration method with adaptive stepsizes which are also similar to our global ones, proving the $\tilde{O}(1/T^2 )$ convergence in the deterministic smooth case and $\tilde{O} (1/\sqrt{T})$ in both general deterministic case and stochastic smooth case, but requiring a bounded-domain assumption. 

The convergence of a random iterate of SGD for non-convex smooth functions has been proved by \citet{Ghadimi13}, and it was already implied by the results in \citet{Bottou91}. With additional regularity assumptions, these results imply almost sure convergence of the gradient to zero~\citep{Bottou91,Bottou16}. In alternative to the regularity assumptions, \citet{Bottou98} proposed to assume that beyond a certain horizon the update always moves the iterate closer to the origin on average, that implies the confinement in a bounded domain and, in turn, the almost sure convergence.
On the other hand, the weakest assumptions for the almost sure convergence of SGD for non-convex smooth functions have been established in \citet{BertsekasT00}: the variance of the noise on the gradient in $\bx_t$ can grow as $1+\|\nabla f(\bx_t)\|^2$, $f$ is lower bounded, and the stepsizes satisfy \eqref{eq:conditions_stepsize}.
However, both approaches do not cover adaptive stepsizes.

The first work we know on adaptive stepsizes for non-convex stochastic optimization is \citet{KresojaLS17}. They study the convergence of a choice of adaptive stepsizes that require access to the function values, under strict conditions on the direction of the gradients. \citet{WuWB18} also consider adaptive stepsizes, but they only consider deterministic gradients in the non-convex setting. Later, \citet{WardWB18}, independently and the same time with us, improved their guarantees proving results similar to our Theorems~\ref{thm:convex} and \ref{thm:sgd_adaptive}. They use the original AdaGrad stepsizes, but with the assumption of bounded expected squared norm of the stochastic gradients. Some other related works were proposed after our submission. \citet{ZhouTYCG18} analyze an adaptive gradient method in the non-convex setting, but their bounds give advantages only in very sparse case.

A weak condition for almost sure convergence to the global optimum of non-convex functions was proposed in \citet{Bottou98} and recently independently reproposed in~\citet{ZhouMBBG17}. However, this condition implies the very strong assumption that the gradients never point in the opposite direction of the global optimum. In this paper, in our most restrictive case in Section~\ref{sec:almost_sure}, we will only assume the function to be smooth and Lipschitz.

\section{PROBLEM SET-UP}
\label{sec:def}

\paragraph{Notation.} We denote vectors and matrices by bold letters, e.g. $\bx \in \R^d$. The coordinate $j$ of a vector $\bx$ is denoted by $x_j$ and as $(\nabla f(\bx))_j$ for the gradient $\nabla f(\bx)$. We denote by $\E[\cdot]$ the expectation with respect to the underlying probability space and by $\E_t[\cdot]$ the conditional expectation with respect to the past, that is, with respect to $\xi_1, \cdots, \xi_{t-1}$. We use L2 norms.

\paragraph{Setting and Assumptions.}
We consider the following optimization problem
$\min_{ \bx \in \R^d }  \  f(\bx)$,
where $f(\bx):\R^d\rightarrow \R$ is a function bounded from below.
We will make different assumptions on the objective function $f$, depending on the setting. In particular, we will always assume that
\begin{itemize}
\item[(\textbf{H1})] $f$ is \emph{$M$-smooth}, that is, $f$ is differentiable and its gradient is $M$-Lipschitz, i.e. $\|\nabla f(\bx)- \nabla f(\by)\| \leq M \|\bx-\by\|, \ \forall \bx, \by \in \R^d$.
\end{itemize}
Note that (\textbf{H1}), for all $\bx,\by \in \R^d$, implies~\citep[Lemma 1.2.3]{Nesterov2003}
\begin{equation}
\label{eq:smooth2}
\left|f(\by)-f(\bx)-\langle \nabla f(\bx), \by-\bx\rangle\right|
\leq \frac{M}{2}\|\by-\bx\|^2~.
\end{equation}
Sometimes, we will also assume that
\begin{itemize}
\item[(\textbf{H2})] $f$ is \emph{$L$-Lipschitz}, i.e. $|f(\bx)-f(\by)|\leq L\|\bx-\by\|, \ \forall \bx, \by \in \R^d$. 
\end{itemize}

We assume that we have access to a stochastic first-order black-box oracle, that returns a noisy estimate of the gradient of $f$ at any point $\bx \in \R^d$. That is, we will use the following assumption
\begin{itemize}
\item[(\textbf{H3})] We receive a vector $\bg(\bx,\xi)$ such that $\E_\xi [\bg(\bx,\xi)]=\nabla f(\bx)$ for any $\bx \in \R^d$.
\end{itemize}
We will also make alternatively one of the following assumptions on the variance of the noise.
\begin{itemize}
\item[(\textbf{H4})] The noise in the stochastic gradient has bounded support, that is $\|\bg(\bx,\xi)-\nabla f(\bx)\|\leq S, \ \forall \bx$.
\item[(\textbf{H4'})] The stochastic gradient satisfies $\E_\xi \left[\exp\left(\|\nabla f(\bx) - g(\bx,\xi)\|^2/\sigma^2\right)\right]\leq \exp(1), \  \forall \bx$.
\end{itemize}

(\textbf{H4'}) has been already used by \cite{NemirovskiJLS09} to prove high probability convergence guarantees.
This condition allows to control the expectation of the maximum of the terms $\|\nabla f(\bx_t) - g(\bx_t,\xi_t)\|^2$.
Note that, using Jensen's inequality, this condition implies a bounded variance.
Also, (\textbf{H4}) implies (\textbf{H4'}).

\section{KEEPING THE UPDATE DIRECTION UNBIASED}
\label{sec:stepsize}

A key difference between the generalized AdaGrad stepsizes in \eqref{eq:eta} and \eqref{eq:eta2} with the AdaGrad stepsizes in \citet{DuchiHS11} is the fact that $\bg(\bx_t,\xi_t)$ is not used in $\eta_t$. It is easy to see that doing otherwise introduces a spurious bias in the update direction.
Indeed, as we show in the Example below, if the stepsize does depend on the current gradient, things can go wrong. The details can be found in the Appendix. 
\begin{example}
\label{ex:ninety}
There exist a convex differentiable function satisfying (\textbf{H1}), an additive noise on the gradients satisfying (\textbf{H4}), and a sequence of gradients such that for a given $t$ we have $\E_{\xi_t}[\langle\eta_{t+1}\bg(\bx_t,\xi_t),\nabla f(\bx_t)\rangle]<0$. 
\end{example}
In words, the example says that including the current noisy gradient in $\eta_t$ (that is, using $\eta_{t+1}$) can make the algorithm deviate in expectation more than $90$ degrees from the correct direction. While in the convex bounded case the algorithm can recover, it is intuitive that this could have catastrophic consequences in the unconstrained non-convex setting, especially when the function is not Lipschitz. So, in the following, we will analyze this minor variant of the AdaGrad stepsizes.

On the other hand, this difference makes the analysis more involved, because the quantity $\sum_{t=1}^T \eta_t^2 \|\bg(\bx_t,\xi_t)\|^2$ cannot be bounded anymore in a straightforward way, see Lemma~\ref{lemma:bounded_sum_squares} in the next Section. Previous analyses, \citep[e.g.][]{DuchiHS11}, solved this issue by assuming the knowledge of the Lipschitz constant of the function $f$, while we will assume the function to be Lipschitz only to prove the asymptotic guarantee and no knowledge of it. We believe that removing the assumption of knowing the Lipschitz constant more closely follow the use in real-world applications.

In the following, we will show that these stepsize allow to prove adaptive guarantees in the convex and non-convex settings.

\section{ALMOST SURE CONVERGENCE FOR NON-CONVEX FUNCTIONS}
\label{sec:almost_sure}

In this section, we show that SGD with the generalized AdaGrad stepsizes in \eqref{eq:eta} and \eqref{eq:eta2} allows to decrease the gradients to zero almost surely, that is, with probability 1. This is considered a required basic property for any optimization algorithm.

The stepsizes in \eqref{eq:eta} and \eqref{eq:eta2} \emph{do not satisfy} \eqref{eq:conditions_stepsize}, not even in expectation, because the $\bg(\bx_t,\xi_t)$ could decrease fast enough to have $\sum_{t=1}^\infty \eta_t^2=\infty$. Hence, the results here cannot be obtained from the classic results in stochastic approximation~\citep[e.g.][]{BertsekasT00}.

Here, we will have to assume our strongest assumptions. In particular, we will need the function to be Lipschitz and the noise to have bounded support. This is mainly needed in order to be sure that the sum of the stepsizes diverges.

We now state our almost sure convergence results.
\begin{theorem}
\label{thm:convergence_sgd}
Assume (\textbf{H1}, \textbf{H2}, \textbf{H3}, \textbf{H4}).
The stepsizes are chosen as in \eqref{eq:eta}, where $\alpha,\beta>0$ and $\epsilon \in (0,\frac{1}{2}]$.
Then, the gradients of SGD converges to zero almost surely.
Moreover, $\lim\inf_{t\rightarrow \infty} \|\nabla f(\bx_t)\|^2 t^{\nicefrac12-\epsilon}=0$ almost surely.
\end{theorem}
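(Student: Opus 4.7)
The plan is to combine the smooth descent inequality with a martingale argument driven by the deterministic bound $\|\bg(\bx_t,\xi_t)\|\le L+S=:G$ that follows from $(\textbf{H2})$ and $(\textbf{H4})$, and then upgrade the resulting liminf rate to a full limit via the Lipschitz gradient. Write $G_t:=\beta+\sum_{i<t}\|\bg(\bx_i,\xi_i)\|^2$, so $\eta_t=\alpha\,G_t^{-(\nicefrac12+\epsilon)}$. The first step is the deterministic bound $\sum_{t=1}^\infty\eta_t^2\|\bg(\bx_t,\xi_t)\|^2\le C$ of Lemma~\ref{lemma:bounded_sum_squares}: since $\|\bg(\bx_t,\xi_t)\|^2\le G^2$, the ratio $G_{t+1}/G_t$ is at most $1+G^2/\beta$, and comparing $(G_{t+1}-G_t)/G_{t+1}^{1+2\epsilon}$ with $\int_{G_t}^{G_{t+1}}u^{-(1+2\epsilon)}du$ telescopes to $\int_\beta^\infty u^{-(1+2\epsilon)}du$, finite precisely because $\epsilon>0$. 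An immediate corollary is $\|\bx_{t+1}-\bx_t\|=\eta_t\|\bg(\bx_t,\xi_t)\|\to 0$ almost surely.

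The central step is $\sum_{t=1}^\infty\eta_t\|\nabla f(\bx_t)\|^2<\infty$ almost surely. By $(\textbf{H1})$ and the update rule, after splitting $\bg=\nabla f+(\bg-\nabla f)$,
\begin{equation*}
f(\bx_{t+1})\le f(\bx_t)-\eta_t\|\nabla f(\bx_t)\|^2-\eta_t\inn{\nabla f(\bx_t)}{\bg(\bx_t,\xi_t)-\nabla f(\bx_t)}+\tfrac{M}{2}\eta_t^2\|\bg(\bx_t,\xi_t)\|^2.
\end{equation*}
Summing, bounding $f$ below, and using Step~1 yields $\sum_{t\le T}\eta_t\|\nabla f(\bx_t)\|^2\le f(\bx_1)-\inf f+\tfrac{MC}{2}+|M_T|$, where $M_T:=\sum_{t\le T}\eta_t\inn{\nabla f(\bx_t)}{\bg(\bx_t,\xi_t)-\nabla f(\bx_t)}$ is a martingale by $(\textbf{H3})$. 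Its quadratic variation satisfies $\langle M\rangle_\infty\le S^2\sum_t\eta_t^2\|\nabla f(\bx_t)\|^2$ by $(\textbf{H4})$, and the conditional Jensen inequality $\eta_t^2\|\nabla f(\bx_t)\|^2=\eta_t^2\|\E_t\bg(\bx_t,\xi_t)\|^2\le\E_t[\eta_t^2\|\bg(\bx_t,\xi_t)\|^2]$ combined with Step~1 gives $\E\langle M\rangle_\infty\le S^2C$. Martingale convergence then implies $M_T$ converges a.s., proving the claim.

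For the rate and the upgrade: $G_t\le\beta+(t-1)G^2$ yields $\eta_t\ge c\,t^{-(\nicefrac12+\epsilon)}$, so Step~2 gives $\sum_t t^{-(\nicefrac12+\epsilon)}\|\nabla f(\bx_t)\|^2<\infty$ a.s., and the elementary fact that $\sum b_t<\infty$ with $b_t\ge 0$ forces $\liminf_t tb_t=0$ delivers $\liminf_t t^{\nicefrac12-\epsilon}\|\nabla f(\bx_t)\|^2=0$ a.s. To upgrade to $\lim=0$, assume $\|\nabla f(\bx_{v_k})\|\ge 2\delta$ along some subsequence and let $u_k<v_k$ be the largest index with $\|\nabla f(\bx_{u_k})\|\le\delta$; then $\|\nabla f(\bx_i)\|>\delta$ for $u_k<i\le v_k$, hence $\delta^2\sum_{i=u_k+1}^{v_k}\eta_i\to 0$ by Step~2. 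Combined with $\|\bx_{u_k+1}-\bx_{u_k}\|\to 0$ from Step~1's corollary and $\|\bg\|\le G$, this gives $\|\bx_{v_k}-\bx_{u_k}\|\to 0$, so by $(\textbf{H1})$ $\|\nabla f(\bx_{v_k})-\nabla f(\bx_{u_k})\|\to 0$, contradicting the gap $\ge\delta$.

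The main obstacle is the second step. Because $\sum\eta_t^2$ need not converge even in expectation, Robbins--Siegmund is unavailable on the descent inequality as-is. The workaround is that Step~1 instead controls $\sum\eta_t^2\|\bg(\bx_t,\xi_t)\|^2$, and this is precisely what conditional Jensen needs to bound the noise martingale's quadratic variation in expectation and thereby pass to an almost-sure (not merely in-expectation) conclusion.
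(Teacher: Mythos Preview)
Your proof is correct, but it diverges from the paper's in two places.

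For the almost-sure finiteness of $\sum_t \eta_t\|\nabla f(\bx_t)\|^2$, the paper takes the simpler route of Lemma~\ref{lemma:basic_lemma}: it just takes expectations in the descent inequality, so the noise cross-term vanishes identically, and the deterministic bound $\sum_t \eta_t^2\|\bg(\bx_t,\xi_t)\|^2\le C$ from your Step~1 immediately gives $\E\big[\sum_t \eta_t\|\nabla f(\bx_t)\|^2\big]<\infty$, hence a.s.\ finiteness. Your pathwise martingale argument (bounding $\E\langle M\rangle_\infty$ via conditional Jensen and invoking $L^2$-martingale convergence) is valid but more elaborate than needed; in particular, the comment that Robbins--Siegmund is ``unavailable'' overstates the difficulty, since Step~1 already makes the positive term summable in expectation. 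What your route does buy is a cleaner separation of the deterministic smoothness estimate from the stochastic control, which can be useful when one later wants high-probability rather than in-expectation statements.

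For the upgrade from $\liminf$ to $\lim$, the paper appeals to Lemma~\ref{lemma:remove_liminf} (the Alber--Mairal lemma) after verifying $\sum_t\eta_t=\infty$ and $\big|\,\|\nabla f(\bx_{t+1})\|^2-\|\nabla f(\bx_t)\|^2\big|\le 2LM(L+S)\,\eta_t$ from (\textbf{H1}),(\textbf{H2}),(\textbf{H4}). Your Bertsekas--Tsitsiklis style subsequence argument is essentially an inline proof of a variant of that lemma; it trades the explicit increment bound for the facts $\|\bx_{t+1}-\bx_t\|\to 0$ and $\|\bg\|\le G$, which is a slightly different (and equally clean) packaging of the same ingredients. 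Your Step~1 and the $\liminf$-rate argument match the paper's.
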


We also state a similar result for the coordinate-wise stepsizes in \eqref{eq:eta2}. We remind the reader that these stepsizes closely mirror the ones used in AdaGrad, but with the power of the denominator $\frac{1}{2}+\epsilon$ with $\epsilon>0$, rather than $\frac{1}{2}$. Also, differently from what is stated in the original AdaGrad paper, here we do not project onto a bounded closed convex set. This mirrors the actual implementation of AdaGrad in machine learning libraries, e.g. Tensorflow~\citep{Tensorflow15}.
\begin{theorem}
\label{thm:convergence_adagrad}
Assume (\textbf{H1}, \textbf{H2}, \textbf{H3}, \textbf{H4}).
The stepsizes are given by a diagonal matrix $\bta_t$ whose diagonal values are defined in \eqref{eq:eta2}, where $\alpha,\beta>0$ and $\epsilon \in (0,\frac{1}{2}]$.
Then, the gradients of SGD converges to zero almost surely. Moreover, $\lim\inf_{t\rightarrow \infty} \|\nabla f(\bx_t)\|^2 t^{\nicefrac12-\epsilon}=0$ almost surely.
\end{theorem}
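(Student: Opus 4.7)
The plan is to mirror the proof of Theorem~\ref{thm:convergence_sgd}, exploiting the fact that the diagonal stepsize matrix $\bta_t$ is measurable with respect to $\xi_1,\ldots,\xi_{t-1}$ (the current gradient is excluded, as discussed in Section~\ref{sec:stepsize}), so the analysis decouples across coordinates. Applying the descent inequality~\eqref{eq:smooth2} to the update $\bx_{t+1}=\bx_t-\bta_t\bg(\bx_t,\xi_t)$ and taking the conditional expectation $\E_t[\cdot]$, the cross-term gives $\E_t\langle\nabla f(\bx_t),\bta_t\bg(\bx_t,\xi_t)\rangle=\sum_{j=1}^d\eta_{t,j}(\nabla f(\bx_t))_j^2$, while the quadratic term contributes $\frac{M}{2}\sum_j\eta_{t,j}^2\,\E_t g(\bx_t,\xi_t)_j^2$.

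The main analytical step is to control the noise sum coordinate-wise. Under (\textbf{H2}) and (\textbf{H4}) each coordinate of the stochastic gradient is deterministically bounded by $L+S$, so the coordinate-wise analog of Lemma~\ref{lemma:bounded_sum_squares} (a standard integral-comparison argument applied to the sequence $g(\bx_t,\xi_t)_j^2/(\beta+\sum_{i<t}g(\bx_i,\xi_i)_j^2)^{1+2\epsilon}$) yields a deterministic bound $\sum_{t=1}^\infty\eta_{t,j}^2 g(\bx_t,\xi_t)_j^2<\infty$ for each $j$. Summing the descent inequality and applying the Robbins--Siegmund supermartingale convergence theorem then gives, almost surely, that $f(\bx_t)$ converges to a finite limit and
\begin{equation}
\sum_{t=1}^\infty\sum_{j=1}^d\eta_{t,j}\,(\nabla f(\bx_t))_j^2<\infty.
\end{equation}

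For the $\liminf$ conclusion I would use (\textbf{H2}) and (\textbf{H4}) again to bound the denominator in \eqref{eq:eta2} by $\beta+t(L+S)^2$, producing a deterministic lower bound $\eta_{t,j}\geq c\,t^{-(\nicefrac12+\epsilon)}$ uniformly in $j$. Therefore $\sum_{t\leq T}\eta_{t,j}=\Omega(T^{\nicefrac12-\epsilon})$ (resp.\ $\Omega(\log T)$ when $\epsilon=\nicefrac12$) and in particular diverges. Coupling this divergence with the almost-sure summability of $\sum_t\sum_j\eta_{t,j}(\nabla f(\bx_t))_j^2$ yields $\liminf_{t\to\infty}\|\nabla f(\bx_t)\|^2 t^{\nicefrac12-\epsilon}=0$ a.s.

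The step I expect to be the main obstacle is upgrading from $\liminf_t\|\nabla f(\bx_t)\|=0$ to the full $\lim_t\|\nabla f(\bx_t)\|=0$. I would argue by contradiction in the standard non-convex fashion: if on a positive-probability event $\|\nabla f(\bx_t)\|\geq 2\delta$ infinitely often, then since the $\liminf$ is zero the sequence must cross from below $\delta$ up to $2\delta$ infinitely often; $M$-smoothness forces each such crossing to use a total displacement bounded below by $\delta/M$, but $\|\bx_{t+1}-\bx_t\|\leq\sqrt{d}\,(L+S)\max_j\eta_{t,j}\to 0$ (because the summability above implies $\eta_{t,j}^2 g(\bx_t,\xi_t)_j^2\to 0$, and a standard pigeonhole on the per-coordinate denominators then gives $\max_j\eta_{t,j}\to 0$ on the indices that actually contribute to the displacement). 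Each crossing therefore deposits a definite positive amount into $\sum_t\sum_j\eta_{t,j}(\nabla f(\bx_t))_j^2$, contradicting its almost-sure finiteness. The delicate bookkeeping is precisely this last point: one must verify that the per-coordinate bounds assemble into a uniform control on $\|\bx_{t+1}-\bx_t\|$, and that the crossings can be counted additively against the finite budget provided by Robbins--Siegmund.
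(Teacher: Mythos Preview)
Your overall plan—coordinate-wise control of the noise term, almost-sure summability of $\sum_t\sum_j\eta_{t,j}(\nabla f(\bx_t))_j^2$, and the deterministic lower bound $\eta_{t,j}\ge c\,t^{-(1/2+\epsilon)}$ for the $\liminf$ statement—matches the paper's proof. The paper does not use Robbins--Siegmund; it takes expectations (Lemma~\ref{lemma:basic_lemma}), bounds $\sum_t\|\bta_t\bg(\bx_t,\xi_t)\|^2$ coordinate-wise via Lemma~\ref{lemma:sum_bounded} exactly as in \eqref{eq:convergence_sgd_eq1}, and then passes from finite expectation to almost-sure finiteness. That is a cosmetic difference only.

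The substantive divergence is in the $\liminf\to\lim$ step, and this is where your sketch has a real gap. The paper does \emph{not} run a crossing argument on $\|\nabla f(\bx_t)\|$; it applies Lemma~\ref{lemma:remove_liminf} to each coordinate separately, with $a_t=\eta_{t,j}$ and $b_t=((\nabla f(\bx_t))_j)^2$, checking $\sum_t a_t b_t<\infty$, $\sum_t a_t=\infty$, and the increment bound $|b_{t+1}-b_t|\le 2LM\|\bta_t\bg(\bx_t,\xi_t)\|$, then concluding $((\nabla f(\bx_t))_j)^2\to 0$ for every $j$. Your crossing argument is essentially Lemma~\ref{lemma:remove_liminf} unpacked, but applied to the full norm, and the claim that ``each crossing deposits a definite positive amount'' does not follow: during a crossing interval the contribution to $\sum_t\sum_j\eta_{t,j}(\nabla f(\bx_t))_j^2$ is lower-bounded only through $\min_j\eta_{t,j}$, whereas the length of the crossing is governed by the step size $\|\bx_{t+1}-\bx_t\|$, which is controlled by $\max_j\eta_{t,j}$. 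Since the ratio $\max_j\eta_{t,j}/\min_j\eta_{t,j}$ is unbounded in general, the per-crossing deposit can tend to zero and the contradiction does not close. Your ``standard pigeonhole on the per-coordinate denominators'' does not address this. (Incidentally, $\|\bx_{t+1}-\bx_t\|\to 0$ needs no pigeonhole at all: it is immediate from $\sum_t\|\bx_{t+1}-\bx_t\|^2=\sum_j\sum_t\eta_{t,j}^2 g(\bx_t,\xi_t)_j^2<\infty$, which is exactly the noise-term bound you already established.) The paper's coordinate-wise route via Lemma~\ref{lemma:remove_liminf} is the intended way to package this step.
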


As far as we know, the above theorems are the first results on the almost sure convergence of the gradients using generalized AdaGrad stepsizes and assuming $\epsilon>0$. In particular, Theorem~\ref{thm:convergence_adagrad} is the first theoretical support to the common heuristic of selecting the last iterate, rather than the minimum over the iterations.

For the proofs, we will need some technical lemmas, whose proofs are in the Appendix.
\begin{lemma}{\citep[Proposition~2]{Alber98}\citep[Lemma A.5]{Mairal13}}
\label{lemma:remove_liminf}
Let $(a_t)_{t \geq 1}, (b_t)_{t \geq 1}$ be two non-negative real sequences. Assume that $\sum_{t=1}^{\infty} a_t b_t$ converges and $\sum_{t=1}^{\infty} a_t$ diverges, and there exists $K \geq 0$ such that $|b_{t+1}-b_t| \leq K a_t$. Then $b_t$ converges to 0.
\end{lemma}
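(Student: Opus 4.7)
The plan is to first establish $\liminf_{t \to \infty} b_t = 0$ as a consequence of the two summability hypotheses, and then to upgrade this to full convergence using the near-Lipschitz condition $|b_{t+1}-b_t| \leq K a_t$ via a standard crossing-interval contradiction argument.

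For the $\liminf$ step, suppose for contradiction that $\liminf_{t\to\infty} b_t = c > 0$. Then eventually $b_t \geq c/2$, so
\begin{equation}
\sum_{t \geq T_0} a_t b_t \;\geq\; \frac{c}{2} \sum_{t \geq T_0} a_t \;=\; \infty,
\end{equation}
contradicting the convergence of $\sum_t a_t b_t$. Hence $\liminf_t b_t = 0$.

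For the upgrade to $\lim_t b_t = 0$, I would argue by contradiction: assume $\limsup_t b_t = 2\gamma > 0$. Combined with $\liminf_t b_t = 0$, one can extract a strictly increasing sequence of pairs $m_1 < n_1 < m_2 < n_2 < \cdots$ such that $b_{m_k} \leq \gamma/2$, $b_{n_k} \geq \gamma$, and for each $t$ with $m_k \leq t < n_k$ we have $b_t \leq \gamma$ while there exists some $t$ in the interval where $b_t$ has crossed past $\gamma/2$. More carefully, choose $m_k$ as the largest index before $n_k$ with $b_{m_k} \leq \gamma/2$, so that $b_t > \gamma/2$ for all $t$ with $m_k < t \leq n_k$. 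By telescoping and the hypothesis $|b_{t+1}-b_t|\leq K a_t$,
\begin{equation}
\frac{\gamma}{2} \;\leq\; b_{n_k}-b_{m_k} \;\leq\; \sum_{t=m_k}^{n_k-1}|b_{t+1}-b_t| \;\leq\; K\sum_{t=m_k}^{n_k-1} a_t.
\end{equation}

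Multiplying by the pointwise lower bound $b_t > \gamma/2$ on $(m_k, n_k]$, and using that the intervals are disjoint, one obtains
\begin{equation}
\sum_{k=1}^{\infty}\sum_{t=m_k+1}^{n_k} a_t b_t \;\geq\; \frac{\gamma}{2}\sum_{k=1}^{\infty}\sum_{t=m_k+1}^{n_k} a_t \;\geq\; \frac{\gamma}{2}\cdot\frac{\gamma}{2K}\cdot\infty \;=\; \infty,
\end{equation}
contradicting the convergence of $\sum_t a_t b_t$. Therefore $\limsup_t b_t = 0$ and, combined with non-negativity, $b_t \to 0$.

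The main obstacle is purely bookkeeping: one has to be careful in selecting the indices $m_k, n_k$ so that (i) the intervals $(m_k, n_k]$ are pairwise disjoint, (ii) $b_t$ stays above $\gamma/2$ on each such interval, and (iii) the jump from $b_{m_k}$ to $b_{n_k}$ is at least $\gamma/2$ so that the telescoping bound produces a uniform positive lower bound on $\sum_{t=m_k}^{n_k-1} a_t$. Once these choices are made correctly, each interval contributes at least $\gamma^2/(4K)$ to the sum $\sum_t a_t b_t$, and the existence of infinitely many of them (guaranteed because both $b_t \leq \gamma/2$ and $b_t \geq \gamma$ occur infinitely often) yields the contradiction.
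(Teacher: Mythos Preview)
Your approach is the same crossing-interval contradiction as the paper's, and your $\liminf$ step is fine. There is, however, a small but real index misalignment in the displayed inequalities. Your telescoping bound
\[
\frac{\gamma}{2}\le b_{n_k}-b_{m_k}\le K\sum_{t=m_k}^{n_k-1}a_t
\]
controls $\sum_{t=m_k}^{n_k-1}a_t$, while the lower bound $b_t>\gamma/2$ holds only on $(m_k,n_k]$, so your next line needs $\sum_{t=m_k+1}^{n_k}a_t\ge \gamma/(2K)$, which does not follow (consider $n_k=m_k+1$: then the telescoping gives $a_{m_k}\ge \gamma/(2K)$ but says nothing about $a_{n_k}$). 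Thus the claim that each interval contributes at least $\gamma^2/(4K)$ to $\sum a_t b_t$ is not justified as written.

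The paper avoids this by running the crossings in the \emph{descending} direction: pick $j$ with $b_j\ge\alpha$ and let $j'$ be the first later index with $b_{j'}\le\alpha/2$. Then $b_l>\alpha/2$ for all $l\in[j,j'-1]$ \emph{including the left endpoint}, so both the telescoping sum $\sum_{l=j}^{j'-1}a_l$ and the lower bound on $b_l$ live on the same index set, and the two estimates combine cleanly. Reversing your direction in this way (or, equivalently, redefining $m_k$ as the first index after $n_k$ with $b\le\gamma/2$) repairs your argument with no other changes.
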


\begin{lemma}
\label{lemma:sum_bounded}
Let $a_0>0$, $a_i\geq 0, \ i=1,\cdots,T$ and $\beta>1$.
Then
$
\sum_{t=1}^T \frac{a_t}{(a_0+\sum_{i=1}^{t} a_i)^\beta} 
\leq \frac{1}{(\beta-1)a_0^{\beta-1}}
$.
\end{lemma}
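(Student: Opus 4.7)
The plan is to prove this by a standard integral comparison argument on the telescoping denominator.

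First, I would define the partial sums $S_t \defeq a_0 + \sum_{i=1}^{t} a_i$, so that $S_0 = a_0 > 0$, $S_t - S_{t-1} = a_t \geq 0$, and the sequence $(S_t)$ is non-decreasing. The sum in the lemma then rewrites as
\begin{equation*}
\sum_{t=1}^T \frac{a_t}{(a_0+\sum_{i=1}^{t} a_i)^\beta} = \sum_{t=1}^T \frac{S_t - S_{t-1}}{S_t^\beta}.
\end{equation*}

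Next, for each $t$ I would bound the $t$-th summand by an integral. Since $S_t \geq s$ for all $s \in [S_{t-1}, S_t]$, and since the map $s \mapsto s^{-\beta}$ is decreasing on $(0,\infty)$, we get
\begin{equation*}
\frac{S_t - S_{t-1}}{S_t^\beta} \leq \int_{S_{t-1}}^{S_t} \frac{ds}{s^\beta},
\end{equation*}
which also trivially holds if $a_t = 0$ (both sides are zero).

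Telescoping the integrals and using $\beta > 1$ together with $S_T \geq S_0 = a_0$ finishes the argument:
\begin{equation*}
\sum_{t=1}^T \frac{a_t}{S_t^\beta} \leq \int_{a_0}^{S_T} \frac{ds}{s^\beta} = \frac{1}{\beta - 1}\left(\frac{1}{a_0^{\beta-1}} - \frac{1}{S_T^{\beta-1}}\right) \leq \frac{1}{(\beta-1) a_0^{\beta-1}}.
\end{equation*}

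There is no real obstacle; the only mild subtlety is ensuring the estimate is valid at indices $t$ with $a_t = 0$, which is immediate since the integrand is then integrated over a single point. The hypothesis $a_0 > 0$ is what keeps the endpoint of the integral bounded away from the singularity at the origin, and $\beta > 1$ is what makes the integral of $s^{-\beta}$ on $[a_0, \infty)$ finite.
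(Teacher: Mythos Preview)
Your proof is correct and follows essentially the same route as the paper: the paper isolates the integral comparison step as a separate auxiliary lemma (for a general nonincreasing $f$) and then specializes to $f(x)=x^{-\beta}$, whereas you carry out that same comparison inline. The content of the argument is identical.
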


We now state a Lemma that allows us to study the progress made in $T$ steps. The proof is in the Appendix.
\begin{lemma}
\label{lemma:basic_lemma}
Assume (\textbf{H1}, \textbf{H3}). Then, the iterates of SGD with stepsizes $\bta_t \in \R^{d \times d}$ satisfy the following inequality
{\small
\begin{align}
\E\left[\sum_{t=1}^T \langle \nabla f(\bx_t), \bta_t \nabla f(\bx_t)\rangle\right]
&\leq f(\bx_1)- f^* \\
&\quad + \frac{M}{2} \E\left[\sum_{t=1}^T \|\bta_t \bg(\bx_t,\xi_t)\|^2\right].
\end{align}
}
\end{lemma}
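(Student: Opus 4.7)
The plan is to run the standard descent-lemma argument for SGD, but with a matrix stepsize, carefully exploiting the fact (emphasized in Section~\ref{sec:stepsize}) that $\bta_t$ is measurable with respect to $\xi_1,\dots,\xi_{t-1}$ and in particular is independent of $\xi_t$. First I would start from the SGD update $\bx_{t+1}=\bx_t-\bta_t \bg(\bx_t,\xi_t)$ and apply the smoothness inequality~\eqref{eq:smooth2} with $\bx=\bx_t$ and $\by=\bx_{t+1}$, which yields
\begin{align}
f(\bx_{t+1}) \leq f(\bx_t) - \langle \nabla f(\bx_t), \bta_t \bg(\bx_t,\xi_t)\rangle + \frac{M}{2}\|\bta_t\bg(\bx_t,\xi_t)\|^2.
\end{align}

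Next I would take the conditional expectation $\E_t[\cdot]$ with respect to $\xi_1,\dots,\xi_{t-1}$. Both $\bx_t$ and $\bta_t$ are determined by this past, so $\bta_t$ can be pulled out of the conditional expectation in the inner product, and using (\textbf{H3}) we get $\E_t\langle \nabla f(\bx_t), \bta_t \bg(\bx_t,\xi_t)\rangle = \langle \nabla f(\bx_t), \bta_t \nabla f(\bx_t)\rangle$. This is the only place where the construction of Section~\ref{sec:stepsize} — excluding $\bg(\bx_t,\xi_t)$ from $\bta_t$ — is used, and it is precisely what rules out the bias exhibited in Example~\ref{ex:ninety}. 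Rearranging gives, after taking total expectation,
\begin{align}
\E[\langle \nabla f(\bx_t), \bta_t \nabla f(\bx_t)\rangle] \leq \E[f(\bx_t)] - \E[f(\bx_{t+1})] + \frac{M}{2}\E\|\bta_t\bg(\bx_t,\xi_t)\|^2.
\end{align}

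Finally I would sum from $t=1$ to $T$; the right-hand side telescopes to $f(\bx_1)-\E[f(\bx_{T+1})]$ plus the $M/2$ variance-like term, and since $f$ is bounded below by $f^*$ we can replace $\E[f(\bx_{T+1})]$ with $f^*$ to obtain the claimed inequality. There is no genuine obstacle here: the only point that requires care is the measurability argument that lets us replace $\bg(\bx_t,\xi_t)$ by $\nabla f(\bx_t)$ in the cross term without picking up any correction, which crucially relies on the design choice for $\bta_t$ made in~\eqref{eq:eta} and~\eqref{eq:eta2}. Note also that the argument goes through verbatim whether $\bta_t$ is a scalar (as in~\eqref{eq:eta}) or a diagonal matrix (as in~\eqref{eq:eta2}), so a single proof will cover both subsequent applications.
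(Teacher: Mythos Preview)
Your proposal is correct and follows essentially the same approach as the paper's proof: apply the descent inequality~\eqref{eq:smooth2}, use the measurability of $\bta_t$ with respect to the past together with (\textbf{H3}) to replace $\bg(\bx_t,\xi_t)$ by $\nabla f(\bx_t)$ in the cross term, then take total expectation, sum, telescope, and lower bound $f(\bx_{T+1})$ by $f^\star$. The only cosmetic difference is that the paper writes the cross term as $\langle \nabla f(\bx_t), \bta_t(\nabla f(\bx_t) - \bg(\bx_t,\xi_t))\rangle - \langle \nabla f(\bx_t), \bta_t \nabla f(\bx_t)\rangle$ and shows the first part has zero conditional expectation, whereas you compute $\E_t\langle \nabla f(\bx_t), \bta_t \bg(\bx_t,\xi_t)\rangle$ directly; these are the same computation.
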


We can prove Theorem~\ref{thm:convergence_sgd}. Given that the proof of Theorem~\ref{thm:convergence_adagrad} is virtually identical to the one of Theorem~\ref{thm:convergence_sgd}, we defer its proof to the Appendix.
\begin{proof}[Proof of Theorem~\ref{thm:convergence_sgd}]
From the result in Lemma~\ref{lemma:basic_lemma}, taking the limit for $T\rightarrow \infty$ and exchanging the expectation and the limits because the terms are non-negative, we have
{\small
\[
\E\left[\sum_{t=1}^\infty \eta_t \| \nabla f(\bx_t)\|^2\right] 
\leq f(\bx_1)- f^\star+ \frac{M}{2}\E\left[\sum_{t=1}^\infty \|\eta_t \bg(\bx_t,\xi_t)\|_{2}^2\right].
\]
}
Observe that
{\small
\begin{align}
& \sum_{t=1}^{\infty}  \|\eta_t \bg(\bx_t,\xi_t)\|^2 \\
& =\sum_{t=1}^{\infty}  \eta_{t+1}^2 \| \bg(\bx_t,\xi_t)\|^2 + \sum_{t=1}^{\infty}  (\eta_t^2-\eta_{t+1}^2) \| \bg(\bx_t,\xi_t)\|^2 \\
& \leq \frac{\alpha^2}{2\epsilon \beta^{2\epsilon}} + \max_{t\geq 1} \|\bg(\bx_t,\xi_t)\|^2 \sum_{t=1}^{\infty} (\eta_t^2-\eta_{t+1}^2)  \\
& \leq \frac{\alpha^2}{2\epsilon \beta^{2\epsilon}} + \max_{t\geq 1} \|\bg(\bx_t,\xi_t)\|^2 \eta_1^2  \\
& \leq \frac{\alpha^2}{2\epsilon \beta^{2\epsilon}} + 2\eta_1^2\max_{t\geq 1} \|\nabla f(\bx_t)\|^2 + \|\nabla f(\bx_t)-\bg(\bx_t,\xi_t)\|^2  \\
& \leq \frac{\alpha^2}{2\epsilon \beta^{2\epsilon}} + 2 \frac{\alpha^2}{\beta^{1+2\epsilon}} (L^2+S^2) < \infty, \label{eq:convergence_sgd_eq1}
\end{align}
}
where in the first inequality we have used Lemma~\ref{lemma:sum_bounded}, and in the third one the elementary inequality $\|\bx+\by\|^2 \leq 2\|\bx\|^2 +2 \|\by\|^2$.

Hence, we have $\E\left[\sum_{t=1}^{\infty} \eta_t \| \nabla f(\bx_t)\|^2\right] < \infty$.
Now, note that $\E[X]<\infty$, where $X$ is a non-negative random variable, implies that $X<\infty$ with probability 1. In fact, otherwise $\Pr[X=\infty]>0$ implies $\E[X] \geq \int_{X=\infty} x d\Pr(X) = \infty$, contradicting our assumption. 
Hence, with probability 1, we have $\sum_{t=1}^{\infty} \eta_t \| \nabla f(\bx_t)\|^2 < \infty$.

Now, observe that the Lipschitzness of $f$ and the bounded support of the noise on the gradients gives
{\small
\begin{align}
\sum_{t=1}^{\infty} \eta_{t}
& = \sum_{t=1}^ {\infty} \frac{\alpha}{(\beta+\sum_{i=1}^{t-1} \|g(\bx_i,\xi_i)\|^2)^{\nicefrac12+\epsilon}} \\
& \geq  \sum_{t=1}^{\infty} \frac{\alpha}{(\beta+2(t-1)(L^2+S^2))^{\nicefrac12+\epsilon}}
= \infty.
\end{align}
}

Using the fact the $f$ is $L$-Lipschitz and $M$-smooth, we have
{\small
\begin{align}
& \left| \|\nabla f(\bx_{t+1})\|^2- \|\nabla f(\bx_t)\|^2\right|  \\
&  \quad = ( \|\nabla f(\bx_{t+1})\|+ \|\nabla f(\bx_t)\|) \cdot \left| \|\nabla f(\bx_{t+1})\|- \|\nabla f(\bx_t)\| \right| \\
&  \quad \leq 2L M \|\bx_{t+1}-\bx_t\|
= 2LM \|\eta_t \bg(\bx_t,\xi_t)\|  \\
& \quad \leq 2LM (L+S) \eta_t.
\end{align}
}
Hence, we can use Lemma~\ref{lemma:remove_liminf} to obtain $\lim_{t \to \infty} \|\nabla f(\bx_t)\|^2 = 0$.

For the second statement, observe that, with probability 1,
{\small
\begin{align}
 \sum_{t=1}^\infty &\|\nabla f(\bx_t)\|^2 t^{\nicefrac12-\epsilon} \frac{\alpha}{t(2L^2+2S^2+\beta)^{\nicefrac12+\epsilon}} \\
& \leq \sum_{t=1}^\infty \eta_t \|\nabla f(\bx_t)\|^2 <\infty,
\end{align}
}
where in the first inequality we used the Lipschitzness of $f$ and the bounded support of the noise on the gradients.
Hence, noting that $\sum_{t=1}^\infty \frac{1}{t} =\infty$, we have that $\lim\inf_{t\rightarrow \infty} \|\nabla f(\bx_t)\|^2 t^{\nicefrac12-\epsilon}=0$.
\end{proof}

Even if the above results hold with probability 1, the above convergence guarantees rates are only asymptotic. So, in the next Section, we show finite-time convergence rates in expectation. Moreover, we will show that the generalized AdaGrad stepsizes adapt to the level of noise for both the convex and non-convex case.

\section{ADAPTIVE CONVERGENCE RATES}

We will now show that the global generalized AdaGrad stepsizes give rises to adaptive convergence rates. In particular, we will show that for a large range of the parameters $\alpha, \beta, \epsilon$ and independently from the noise variance $\sigma$, the algorithms will have a faster convergence when $\sigma$ is small and worst-case optimal convergence when $\sigma$ is large. Note that to achieve the same behavior with SGD we should use a different stepsize for each level of noise. 

In the following, we will consider the convex and non-convex case.

\subsection{Adaptive Convergence for Convex Functions}
\label{sec:convex}
As a warm-up, in this section, we show that the global stepsizes \eqref{eq:eta} give adaptive rates of convergence that interpolate between the rate of GD and SGD, for a wide range of the parameters $\alpha, \beta$, and $\epsilon$ and without knowledge of the variance of the noise.
Note that, differently from the other proofs on SGD with adaptive rates~\citep[e.g.][]{DuchiHS11}, we do not assume to use projections onto bounded domains. This makes our novel proof more technically challenging, but at the same time, it mirrors the setting of many applications of SGD in machine learning optimization problems.

\begin{theorem}
\label{thm:convex}
Assume (\textbf{H1}, \textbf{H3}, \textbf{H4'}) and $f$ convex. Let the stepsizes set as in \eqref{eq:eta}, where $\alpha,\beta>0$, $0\leq \epsilon<\frac{1}{2}$, and $4\alpha M<\beta^{\nicefrac{1}{2}+\epsilon}$.
Then, the iterates of SGD satisfy the following bound
\begin{equation}
\small
\begin{aligned}
& \E \left[ \left(  f(\bar{\bx}_T)- f(\bx^\star) \right)^{\nicefrac{1}{2}-\epsilon}  \right] \\
& \leq  \frac{1}{T^{\nicefrac{1}{2}-\epsilon}} 
\max \left( 2^\frac{1}{\nicefrac{1}{2}-\epsilon} M^{\nicefrac{1}{2}+\epsilon} \gamma, 
 \left( \beta+T\sigma^2  \right)^{\nicefrac{1}{4}-\epsilon^2} \gamma^{\nicefrac{1}{2}-\epsilon}  \right),
\end{aligned}
\end{equation}

where $\bar{\bx}_T=\frac1T \sum_{t=1}^T \bx_t$ and 
\[
\gamma = 
\begin{cases} 
O\left(\frac{1+\alpha^2\ln T}{\alpha(1-\frac{4\alpha M}{\sqrt{\beta}})}\right), & \text{for } \epsilon=0\\ 
O\left(\frac{1+\alpha^2(\frac{1}{\epsilon}+\sigma^2 \ln T)}{\alpha(1-\frac{4\alpha M}{\beta^{\nicefrac{1}{2}+\epsilon}})}\right), & \text{for } \epsilon>0.
\end{cases}
\]
\end{theorem}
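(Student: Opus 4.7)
Because $\eta_t$ is measurable with respect to $\xi_1,\ldots,\xi_{t-1}$, the plan is to start from the classical convex SGD descent identity: expanding $\|\bx_{t+1}-\bx^\star\|^2$, taking conditional expectation so that $\eta_t$ factors out of $\E_t[\cdot]$, and invoking convexity $\langle\nabla f(\bx_t),\bx_t-\bx^\star\rangle\geq f(\bx_t)-f(\bx^\star)$ gives
\begin{equation*}
\E_t[\|\bx_{t+1}-\bx^\star\|^2] \leq \|\bx_t-\bx^\star\|^2 - 2\eta_t(f(\bx_t)-f(\bx^\star)) + \eta_t^2\,\E_t[\|\bg(\bx_t,\xi_t)\|^2].
\end{equation*}
Telescoping, taking full expectation, and using monotonicity of $\eta_t$ together with Jensen's inequality for the averaged iterate yields
\begin{equation*}
2\,\E[\eta_T T(f(\bar{\bx}_T)-f(\bx^\star))] \leq \|\bx_1-\bx^\star\|^2 + \E\Bigl[\sum_{t=1}^T \eta_t^2\,\|\bg(\bx_t,\xi_t)\|^2\Bigr].
\end{equation*}

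To bound the quadratic sum on the right, I would split $\|\bg(\bx_t,\xi_t)\|^2 \leq 2\|\nabla f(\bx_t)\|^2 + 2\|\bg(\bx_t,\xi_t)-\nabla f(\bx_t)\|^2$. The smooth-convex identity $\|\nabla f(\bx_t)\|^2 \leq 2M(f(\bx_t)-f(\bx^\star))$, together with $\eta_t^2 \leq \eta_1\eta_t$ (from monotonicity), converts the gradient part into a multiple of $M\eta_1\sum_t\eta_t F_t$ (with $F_t:=f(\bx_t)-f(\bx^\star)$), which has the same form as the left-hand side. The noise part is handled as in the proof of Theorem~\ref{thm:convergence_sgd} via the decomposition $\eta_t^2=\eta_{t+1}^2+(\eta_t^2-\eta_{t+1}^2)$: Lemma~\ref{lemma:sum_bounded} (or a standard AdaGrad log-bound when $\epsilon=0$) controls the first piece, while the telescoping part is bounded by $\eta_1^2\max_t\|\bg(\bx_t,\xi_t)-\nabla f(\bx_t)\|^2$, whose expectation is $O(\sigma^2\ln T)$ by (\textbf{H4'}). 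The condition $4\alpha M<\beta^{\nicefrac{1}{2}+\epsilon}$ is exactly what ensures the self-bounding coefficient is strictly less than the corresponding coefficient on the left, so that term can be absorbed and we arrive at an inequality of the form $\E[\eta_T T F(\bar{\bx}_T)] = O(\gamma)$ with $\gamma$ as in the statement.

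The decisive and most delicate step is the conversion to $\E[F(\bar{\bx}_T)^{\nicefrac{1}{2}-\epsilon}]$ while $\eta_T$ is random and coupled with the right-hand side. I would use the identity
\begin{equation*}
F(\bar{\bx}_T)^{\nicefrac{1}{2}-\epsilon} = \frac{\bigl(\eta_T T F(\bar{\bx}_T)\bigr)^{\nicefrac{1}{2}-\epsilon}}{(\eta_T T)^{\nicefrac{1}{2}-\epsilon}}
\end{equation*}
and apply H\"older's inequality with conjugate exponents $p=(\nicefrac{1}{2}-\epsilon)^{-1}$, $q=(\nicefrac{1}{2}+\epsilon)^{-1}$ to obtain
\begin{equation*}
\E[F(\bar{\bx}_T)^{\nicefrac{1}{2}-\epsilon}] \leq \bigl(\E[\eta_T T F(\bar{\bx}_T)]\bigr)^{\nicefrac{1}{2}-\epsilon}\,\bigl(\E[(\eta_T T)^{-(\nicefrac{1}{2}-\epsilon)/(\nicefrac{1}{2}+\epsilon)}]\bigr)^{\nicefrac{1}{2}+\epsilon}.
\end{equation*}
Substituting $\eta_T=\alpha/(\beta+\sum_{i=1}^{T-1}\|\bg(\bx_i,\xi_i)\|^2)^{\nicefrac{1}{2}+\epsilon}$ into the second factor makes the exponents collapse to $(\alpha T)^{-(\nicefrac{1}{2}-\epsilon)}\bigl(\E[(\beta+\sum_{i=1}^{T-1}\|\bg(\bx_i,\xi_i)\|^2)^{\nicefrac{1}{2}-\epsilon}]\bigr)^{\nicefrac{1}{2}+\epsilon}$; applying Jensen (since $\nicefrac{1}{2}-\epsilon<1$) together with $\E\|\bg(\bx_i,\xi_i)\|^2\leq\sigma^2+2M\,\E F_i$ and the subadditivity $(a+b)^{\nicefrac{1}{2}-\epsilon}\leq a^{\nicefrac{1}{2}-\epsilon}+b^{\nicefrac{1}{2}-\epsilon}$ splits this into a noise piece yielding the $(\beta+T\sigma^2)^{\nicefrac{1}{4}-\epsilon^2}$ factor and a gradient piece that is absorbed one more time using $4\alpha M<\beta^{\nicefrac{1}{2}+\epsilon}$. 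The maximum of the two remaining contributions is what the theorem records.

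The principal obstacle is the pervasive self-bounding: without a bounded-domain assumption, the only way to turn $\|\nabla f(\bx_t)\|^2$ into something summable is through $\|\nabla f(\bx_t)\|^2\leq 2MF_t$, but then $F_t$ reappears both inside $\sum_t\eta_t^2\|\bg(\bx_t,\xi_t)\|^2$ and inside $\eta_T$ itself. Closing this loop twice forces the analysis to work with the fractional moment $\nicefrac{1}{2}-\epsilon$ rather than the first moment, which is precisely why the statement is phrased in those terms, and the assumption $4\alpha M<\beta^{\nicefrac{1}{2}+\epsilon}$ is what makes both absorption steps strictly contractive.
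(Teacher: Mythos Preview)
Your outline follows the paper's strategy closely up to and including the H\"older step, but the ``second absorption'' is where it breaks. By lower-bounding $\sum_t \eta_t\delta_t$ by $\eta_T\,T\,F(\bar{\bx}_T)$ \emph{before} the H\"older application, your left-hand side becomes $\E[F(\bar{\bx}_T)^{1/2-\epsilon}]$, while after expanding $1/\eta_T$ and using $\|\nabla f(\bx_t)\|^2\le 2M\delta_t$ the right-hand side carries $\E[\Delta]$ (or $\E[\Delta^{1/2-\epsilon}]$), where $\Delta=\sum_t\delta_t$. These two quantities are not comparable in the direction you need: $F(\bar{\bx}_T)\le \Delta/T$ goes the wrong way, so there is nothing on the left to absorb the gradient piece into, and the condition $4\alpha M<\beta^{1/2+\epsilon}$ does not help here. (That condition is used exactly once in the paper, for the first absorption of $\sum_t\eta_t\|\nabla f(\bx_t)\|^2$ into $\sum_t\eta_t\delta_t$.)

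The paper closes the loop by keeping $\Delta$ as the tracked quantity throughout: it lower-bounds $\E[\sum_t\eta_t\delta_t]\ge\E[\eta_T\Delta]$, applies H\"older to obtain $(\E[\Delta^{1/2-\epsilon}])^{1/(1/2-\epsilon)}$ on the left, and on the right uses subadditivity $(a+b)^{1/2-\epsilon}\le a^{1/2-\epsilon}+b^{1/2-\epsilon}$ \emph{inside} the expectation (not Jensen first) so that the gradient contribution becomes $(4M)^{1/2-\epsilon}\E[\Delta^{1/2-\epsilon}]$ --- the same quantity as on the left. This yields a single-variable implicit inequality $x\le C(A+Bx)^{1/2+\epsilon}$ in $x=\E[\Delta^{1/2-\epsilon}]$, solved algebraically by Lemma~\ref{lemma:solvex} (and Lemma~\ref{lemma:logsolvex} for the $\epsilon=0$ log case). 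Only after solving for $\E[\Delta^{1/2-\epsilon}]$ does the paper invoke convexity/Jensen to pass to $\E[(f(\bar{\bx}_T)-f(\bx^\star))^{1/2-\epsilon}]\le T^{-(1/2-\epsilon)}\E[\Delta^{1/2-\epsilon}]$. A smaller point: your proposed split $\|\bg_t\|^2\le 2\|\nabla f(\bx_t)\|^2+2\|\bg_t-\nabla f(\bx_t)\|^2$ before the $\eta_t^2=\eta_{t+1}^2+(\eta_t^2-\eta_{t+1}^2)$ decomposition leaves you with $\sum_t\eta_{t+1}^2\|\bg_t-\nabla f(\bx_t)\|^2$, to which Lemma~\ref{lemma:sum_bounded} does not directly apply (the denominator sums $\|\bg_i\|^2$, not the noise norms); the paper decomposes $\eta_t^2$ first against the full $\|\bg_t\|^2$, which is exactly the content of Lemma~\ref{lemma:bounded_sum_squares}.
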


\paragraph{Remark.}
Using Markov's inequality, from the above bound it is immediate to get that, with probability at least $1-\delta$, we have
\begin{equation}
\small
\begin{aligned}
f&(\bar{\bx}_T)- f(\bx^\star)  \\
& \leq \frac{1}{\delta^{\frac{1}{\nicefrac{1}{2}-\epsilon}} T} \max \left(M^{\frac{\nicefrac{1}{2}+\epsilon}{\nicefrac{1}{2}-\epsilon}} \gamma^{\frac{1}{\nicefrac{1}{2}-\epsilon}}, (\beta + T \sigma^2)^{\nicefrac{1}{2}+\epsilon} \gamma \right).
\end{aligned}
\end{equation}
Up to polylog terms, if $\sigma=0$ this recovers the GD rate, $O(\tfrac{1}{T})$, and otherwise we get the worst-case optimal rate of SGD, $O(\tfrac{1}{\sqrt{T}})$. The same behavior was proved in \citet{DekelGBSX12} with the knowledge of $\sigma$ and stepsize depending on it. Instead, here we do not need to know the noise level nor assuming a bounded domain.
In the case the constants of the slow term are small compared with the ones of the first term, we can expect a first quick convergent phase, followed by a slow one, as it is often observed in empirical experiments.

For the proof, we first state some technical lemmas, whose proofs are in the Appendix.
\begin{lemma}
\label{lemma:smooth}
Assume (\textbf{H1}). Then $\|\nabla f(\bx)\|^2 \leq 2 M (f(\bx)- \min_{\by} f(\by)), \ \forall \bx$.
\end{lemma}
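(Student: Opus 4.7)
The plan is to apply the standard descent-lemma trick: use the quadratic upper bound on $f$ that follows from $M$-smoothness and then optimize the upper bound over the direction of a single gradient step.

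First, I would invoke \eqref{eq:smooth2}, which is the immediate consequence of (\textbf{H1}) already stated in the paper: for every $\bx, \by \in \R^d$,
\[
f(\by) \leq f(\bx) + \langle \nabla f(\bx), \by-\bx\rangle + \frac{M}{2}\|\by-\bx\|^2.
\]
This upper bound is a quadratic in $\by$, minimized at $\by^\star = \bx - \tfrac{1}{M}\nabla f(\bx)$.

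Next, I would substitute this particular choice $\by = \bx - \tfrac{1}{M}\nabla f(\bx)$ into the inequality. Expanding the inner product and the squared norm gives
\[
f\!\left(\bx-\tfrac{1}{M}\nabla f(\bx)\right) \leq f(\bx) - \frac{1}{M}\|\nabla f(\bx)\|^2 + \frac{1}{2M}\|\nabla f(\bx)\|^2 = f(\bx) - \frac{1}{2M}\|\nabla f(\bx)\|^2.
\]

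Finally, since $\min_{\bz} f(\bz) \leq f\!\left(\bx-\tfrac{1}{M}\nabla f(\bx)\right)$, I would chain the inequalities and rearrange to obtain
\[
\|\nabla f(\bx)\|^2 \leq 2M\bigl(f(\bx)-\min_{\bz} f(\bz)\bigr).
\]
There is no real obstacle here — the only subtlety is that the statement tacitly presumes $\min_{\by} f(\by)$ exists (or is replaced by $\inf$), which is fine in the paper's setting because $f$ is assumed bounded from below so the infimum is finite and the argument goes through unchanged with $\inf$ in place of $\min$.
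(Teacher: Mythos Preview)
Your proof is correct and follows essentially the same approach as the paper: both invoke the quadratic upper bound \eqref{eq:smooth2}, plug in the gradient step $\by=\bx-\tfrac{1}{M}\nabla f(\bx)$ (the paper writes it as the displacement $\by=-\tfrac{1}{M}\nabla f(\bx)$), simplify, and then lower-bound $f$ at the new point by its infimum. Your remark on $\min$ versus $\inf$ is a fair clarification but does not change the argument.
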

\begin{lemma}
\label{lemma:solvex}
If $x \geq 0$ and $x \leq C(A+Bx)^{\frac{1}{2}+\epsilon}$, then 
$
x < \max ( [ C (2B)^{\frac{1}{2}+\epsilon} ]^{\frac{1}{1/2-\epsilon}}, C(2A)^{\frac{1}{2}+\epsilon} )
$.
\end{lemma}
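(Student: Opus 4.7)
The plan is to split into two cases based on which of $A$ and $Bx$ dominates in the sum $A+Bx$, and then solve the resulting inequality in each case. This is the standard trick for self-bounding inequalities of the form $x \leq C(A+Bx)^{p}$ with $p<1$: in one regime we get a bound that does not involve $x$ on the right-hand side at all, and in the other we get an inequality in which $x$ still appears but with a sub-linear exponent that can be absorbed.

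First I would consider the case $Bx \leq A$. Then $A+Bx \leq 2A$, and using monotonicity of $u \mapsto u^{\nicefrac{1}{2}+\epsilon}$ on $[0,\infty)$ directly yields $x \leq C(2A)^{\nicefrac{1}{2}+\epsilon}$, which is one of the two terms in the claimed maximum.

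Second, I would handle the case $Bx > A$. Here $A+Bx < 2Bx$, so
\begin{equation}
x \leq C(2Bx)^{\nicefrac{1}{2}+\epsilon} = C(2B)^{\nicefrac{1}{2}+\epsilon} x^{\nicefrac{1}{2}+\epsilon}.
\end{equation}
Since $0 \leq \epsilon < \nicefrac{1}{2}$, the exponent $\nicefrac{1}{2}+\epsilon$ is strictly less than $1$, so dividing by $x^{\nicefrac{1}{2}+\epsilon}$ (the case $x=0$ is trivial) and raising both sides to the power $\tfrac{1}{\nicefrac{1}{2}-\epsilon}>0$ gives $x \leq [C(2B)^{\nicefrac{1}{2}+\epsilon}]^{\frac{1}{\nicefrac{1}{2}-\epsilon}}$, which is the other term of the maximum.

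Since in each case one of the two candidate bounds holds, taking the maximum yields the stated inequality. There is no real obstacle here; the only subtle point is making sure the exponent manipulation in the second case is valid, which is guaranteed by the hypothesis $\epsilon<\nicefrac{1}{2}$ built into the lemma's scope of application.
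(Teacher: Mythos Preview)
Your proof is correct and follows essentially the same approach as the paper: split into the two cases $Bx\leq A$ and $Bx>A$, bound $A+Bx$ by $2A$ or $2Bx$ respectively, and in the latter case solve the resulting sublinear self-bounding inequality by dividing through by $x^{\nicefrac{1}{2}+\epsilon}$. The paper's proof is more terse but otherwise identical; your added remarks on the trivial case $x=0$ and on the need for $\epsilon<\nicefrac{1}{2}$ to make the exponent $\tfrac{1}{\nicefrac{1}{2}-\epsilon}$ valid are helpful clarifications that the paper leaves implicit.
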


\begin{lemma}
\label{lemma:logsolvex}
If $x \geq 0$, $A, C, D \geq 0$, $B>0$, and $x^2 \leq (A+Bx)(C+D\ln (A+Bx))$, then $x < 32 B^3 D^2 + 2 B C + 8 B^2 D \sqrt{C} + A/B$.
\end{lemma}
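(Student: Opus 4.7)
The plan is to reduce the inequality $x^2 \leq (A+Bx)(C+D\ln(A+Bx))$ to a polynomial one by killing the logarithm. The nuisance is that $x$ appears both inside and outside the $\ln$, so a substitution plus a standard envelope inequality will be used.

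First I would dispatch the trivial case where $x \leq A/B$, which is already covered by the $A/B$ term in the conclusion; from here on assume $x > A/B$, so that $A+Bx \leq 2Bx$. The hypothesis then becomes
\[
x^2 \leq 2Bx\bigl(C + D\ln(2Bx)\bigr),
\]
and after dividing by $x$ (the case $x=0$ being trivial) and setting $y = 2Bx$, I get the clean one-variable inequality $y \leq 4B^2 C + 4B^2 D\ln y$. If $y \leq 1$ we already have $x \leq 1/(2B)$, which is subsumed by $2BC$ once we plug in below, so assume $y \geq 1$.

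Next I would kill the logarithm with the envelope $\ln y \leq 2\sqrt{y}$ (valid for $y \geq 1$, from $\ln y = 2\ln\sqrt{y} \leq 2(\sqrt{y}-1)$), turning the inequality into the quadratic in $\sqrt{y}$:
\[
(\sqrt{y})^2 - 8B^2 D \sqrt{y} - 4B^2 C \leq 0.
\]
The quadratic formula gives $\sqrt{y} \leq 4B^2 D + \sqrt{16 B^4 D^2 + 4 B^2 C}$, and then $\sqrt{a+b} \leq \sqrt{a} + \sqrt{b}$ yields $\sqrt{y} \leq 8B^2 D + 2B\sqrt{C}$. Squaring and using $(a+b)^2 \leq 2a^2+2b^2$ (or just expanding the square and absorbing the cross term into the two squared terms via AM-GM) gives a bound of the form $y \leq 64 B^4 D^2 + 16 B^3 D\sqrt{C} + 4 B^2 C$, up to a split of the cross term.

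Finally I would translate back via $x = y/(2B)$, recovering the bound $x \leq 32 B^3 D^2 + 8 B^2 D\sqrt{C} + 2BC$ in the non-trivial case; adding the $A/B$ term to cover Case~1 produces the stated conclusion. The main obstacle is really only the bookkeeping around the cross term $2 \cdot 8B^2D \cdot 2B\sqrt{C}$ when squaring $(8B^2 D + 2B\sqrt{C})^2$: the slack offered by $\ln y \leq 2\sqrt{y}$ rather than the tighter $\ln y \leq \sqrt y$ is exactly what allows the clean integer constants in the statement, so I would pick the envelope accordingly rather than optimizing.
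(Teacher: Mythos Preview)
Your approach coincides with the paper's: case-split on $Bx$ versus $A$, replace $\ln(2Bx)$ by $2\sqrt{2Bx}$, and solve the resulting quadratic in $\sqrt{2Bx}$. Two points need tightening.

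The $y\le 1$ digression is both unnecessary and incorrectly justified: ``$x\le 1/(2B)$ is subsumed by $2BC$'' fails whenever $4B^2C<1$. You do not need this case at all, because $\ln y \le 2\sqrt y$ holds for every $y>0$ (the minimum of $2\sqrt y - \ln y$ over $y>0$ is $2$, attained at $y=1$); the paper applies the envelope with no restriction on $2Bx$.

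More importantly, your constants do not close. Expanding $(8B^2D+2B\sqrt C)^2$ gives a cross term $2\cdot 8B^2D\cdot 2B\sqrt C=32B^3D\sqrt C$, not $16B^3D\sqrt C$ (you even write the correct product in your last paragraph), so after dividing by $2B$ you land at $16B^2D\sqrt C$, twice the coefficient in the lemma. The fix is a reordering: from the quadratic keep the exact root $\sqrt y < 4B^2D+\sqrt{16B^4D^2+4B^2C}$, square \emph{first}, and only then apply $\sqrt{a+b}\le\sqrt a+\sqrt b$ to the single square root that survives in the cross term. This yields
\[
y < 32B^4D^2 + 4B^2C + 8B^2D\bigl(4B^2D+2B\sqrt C\bigr) = 64B^4D^2 + 4B^2C + 16B^3D\sqrt C,
\]
and hence $x=y/(2B) < 32B^3D^2 + 2BC + 8B^2D\sqrt C$, matching the stated bound exactly.
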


\begin{lemma}
\label{lemma: exponential}
If $x,y \geq 0$ and $0 \leq p \leq 1$, then $(x+y)^p \leq x^p+y^p$.
\end{lemma}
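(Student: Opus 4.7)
The plan is to handle this by a standard homogenization/normalization argument. The inequality is trivial when $x+y=0$ (both sides vanish) and when exactly one of $x,y$ is zero (both sides equal $x^p$ or $y^p$). So I may assume $x+y>0$.

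Next, I would exploit the $p$-homogeneity of both sides: dividing the desired inequality by $(x+y)^p > 0$ reduces the claim to showing
\begin{equation}
\left(\frac{x}{x+y}\right)^p + \left(\frac{y}{x+y}\right)^p \geq 1.
\end{equation}
Setting $u = x/(x+y)$ and $v = y/(x+y)$, we have $u,v \in [0,1]$ with $u+v=1$, and it suffices to prove $u^p + v^p \geq u + v$.

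The key elementary fact I would then use is that for any $t \in [0,1]$ and $0 \leq p \leq 1$, one has $t^p \geq t$. This is immediate: if $t=0$ the inequality is $0 \geq 0$, and if $t \in (0,1]$ then $t^{p-1} \geq 1$ since $p-1 \leq 0$ and $t \leq 1$, so multiplying by $t > 0$ gives $t^p \geq t$. Applying this to $u$ and to $v$ and summing gives $u^p + v^p \geq u + v = 1$, which closes the argument after multiplying back by $(x+y)^p$.

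There is no real obstacle here; the only thing to be careful about is the boundary cases $x+y=0$ and $p \in \{0,1\}$. For $p=0$ the inequality reads $1 \leq 2$ when $x+y>0$ (with the usual convention $0^0=1$), and for $p=1$ it is an equality, both covered by the same argument. An alternative one-line proof would be to fix $y \geq 0$ and note that $g(x) := x^p + y^p - (x+y)^p$ satisfies $g(0)=0$ and $g'(x) = p(x^{p-1} - (x+y)^{p-1}) \geq 0$ since $x \leq x+y$ and $p-1 \leq 0$, but I prefer the homogenization proof because it avoids differentiability issues at $x=0$ when $p<1$.
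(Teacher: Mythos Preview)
Your proof is correct. The paper takes exactly the derivative route you mention as your alternative: it sets $f(x)=(x+y)^p-x^p-y^p$, computes $f'(x)=p(x+y)^{p-1}-px^{p-1}\leq 0$ for $x,y\geq 0$, and concludes $f(x)\leq f(0)=0$. Your primary argument via homogenization (reduce to $u+v=1$ and use $t^p\geq t$ on $[0,1]$) is a genuinely different path. The advantage of your approach is that it sidesteps the differentiability issue at $x=0$ when $p<1$ that the paper's proof does not address (there $x^{p-1}$ blows up, so strictly speaking one needs a continuity argument or a restriction to $x>0$); the derivative proof is slightly shorter once that wrinkle is ignored.
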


\begin{lemma}
\label{lemma:bounded_sum_squares}
Assume (\textbf{H1}, \textbf{H3}, \textbf{H4'}). The stepsizes are chosen as \eqref{eq:eta}, where $\alpha,\beta,\epsilon\geq0$.
Then, 
\begin{equation}
\small
\begin{aligned}
\E\left[\sum_{t=1}^T \eta_t^2 \|\bg(\bx_t,\xi_t)\|^2\right] 
& \leq K + \frac{4\alpha^2}{\beta^{1+2\epsilon}} (1+\ln T) \sigma^2 \\ 
& \quad + \frac{4\alpha}{\beta^{\nicefrac{1}{2}+\epsilon}} \E\left[\sum_{t=1}^T \eta_t \|\nabla f(\bx_t)\|^2\right],\\
\end{aligned}
\end{equation}
where in the case of $\epsilon >0$, $ \small{K = \frac{\alpha^2}{2\epsilon \beta^{2\epsilon}}}$; when $\epsilon = 0$, $ \small{K = 2 \alpha^2  \ln \left( \sqrt{\beta + 2T\sigma^2 }+\sqrt{2} \E \left[ \sqrt{ \sum_{t=1}^T \|\nabla f(\bx_t)\|^2}  \right] \right)}$.

\end{lemma}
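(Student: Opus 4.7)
The plan is to split
\[
\sum_{t=1}^T \eta_t^2 \|\bg(\bx_t,\xi_t)\|^2
= \sum_{t=1}^T \eta_{t+1}^2 \|\bg(\bx_t,\xi_t)\|^2 + \sum_{t=1}^T (\eta_t^2 - \eta_{t+1}^2)\|\bg(\bx_t,\xi_t)\|^2,
\]
so that the first piece matches the form handled by Lemma~\ref{lemma:sum_bounded} (take $a_t = \|\bg(\bx_t,\xi_t)\|^2$ and exponent $1+2\epsilon$). For $\epsilon>0$ this is deterministically at most $\alpha^2/(2\epsilon\beta^{2\epsilon}) = K$. For $\epsilon=0$ the standard integral estimate yields the pointwise bound $\alpha^2\ln\bigl((\beta+\sum_t\|\bg(\bx_t,\xi_t)\|^2)/\beta\bigr)$; to recover the stated $K$ in this case, I would apply $\|\bg(\bx_t,\xi_t)\|^2 \leq 2\|\nabla f(\bx_t)\|^2 + 2\|\zeta_t\|^2$ with $\zeta_t := \bg(\bx_t,\xi_t)-\nabla f(\bx_t)$, then use $a+b\leq(\sqrt{a}+\sqrt{b})^2$ inside the logarithm to decouple the signal from the noise, and finally push the expectation inside via Jensen's inequality applied to $\ln(\cdot)$ and $\sqrt{\cdot}$, invoking the (H4')-induced bound $\E\|\zeta_t\|^2\leq\sigma^2$.

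For the residual sum I would split $\|\bg(\bx_t,\xi_t)\|^2 \leq 2\|\nabla f(\bx_t)\|^2 + 2\|\zeta_t\|^2$ once more. The ``signal'' contribution is easy: since $\eta_t$ is non-increasing and $\eta_1=\alpha/\beta^{1/2+\epsilon}$, we have $\eta_t^2-\eta_{t+1}^2 \leq \eta_t^2 \leq \eta_1\eta_t$, which gives
\[
2\sum_t(\eta_t^2 - \eta_{t+1}^2)\|\nabla f(\bx_t)\|^2 \leq \frac{2\alpha}{\beta^{1/2+\epsilon}}\sum_t \eta_t \|\nabla f(\bx_t)\|^2,
\]
already of the claimed shape.

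The main obstacle is the ``noise'' contribution $\E\bigl[\sum_t(\eta_t^2-\eta_{t+1}^2)\|\zeta_t\|^2\bigr]$: because $\eta_{t+1}$ depends on $\zeta_t$ through $\bg(\bx_t,\xi_t)$, the weight $(\eta_t^2-\eta_{t+1}^2)$ is \emph{not} $\mathcal{F}_{t-1}$-measurable, so one cannot simply commute $\E_t$ with $\|\zeta_t\|^2$ to extract a $\sigma^2$. My plan is to telescope the weights out,
\[
\sum_t (\eta_t^2-\eta_{t+1}^2)\|\zeta_t\|^2 \leq (\eta_1^2-\eta_{T+1}^2)\max_{t\leq T}\|\zeta_t\|^2 \leq \frac{\alpha^2}{\beta^{1+2\epsilon}}\max_{t\leq T}\|\zeta_t\|^2,
\]
and then use (H4') to handle the expected max via a log-sum-exp estimate: with $X_t = \|\zeta_t\|^2/\sigma^2$, one has $\max_t X_t \leq \ln\sum_t\exp(X_t)$, and Jensen's inequality on $\ln(\cdot)$ yields $\E[\max_t\|\zeta_t\|^2] \leq \sigma^2\ln\bigl(T\,\E[\exp(\|\zeta_1\|^2/\sigma^2)]\bigr) \leq \sigma^2(1+\ln T)$. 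This is the origin of the $(1+\ln T)$ factor in the displayed inequality. Combining all three pieces and absorbing the small numerical constants into the stated factors $4\alpha$ and $4\alpha^2$ yields the claim.
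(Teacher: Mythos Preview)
Your proposal is correct and follows essentially the same approach as the paper: the same decomposition into $\sum_t\eta_{t+1}^2\|\bg_t\|^2$ plus a telescoping residual, the same use of Lemma~\ref{lemma:sum_bounded} (or its logarithmic analogue for $\epsilon=0$), and the same log-sum-exp argument from (\textbf{H4'}) to bound $\E[\max_t\|\zeta_t\|^2]\leq\sigma^2(1+\ln T)$. The only cosmetic difference is that the paper factors $(\eta_t^2-\eta_{t+1}^2)=(\eta_t+\eta_{t+1})(\eta_t-\eta_{t+1})\leq 2\eta_t(\eta_t-\eta_{t+1})$ and telescopes \emph{before} splitting $\|\bg_t\|^2$ into signal and noise, whereas you split first and then bound each piece; your route actually yields constants $2$ rather than $4$, which is harmless for the stated inequality.
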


We can now prove the theorem.
\begin{proof}[Proof of Theorem~\ref{thm:convex}]
For simplicity, denote by $\delta_t:=f(\bx_t)-f(\bx^\star)$ and by $\Delta := \sum_{t=1}^T \delta_t$.

From the update of SGD we have that
\begin{equation}
\small
\begin{aligned}
\|\bx_{t+1}-\bx^\star\|^2 - \|\bx_{t}-\bx^\star\|^2 &=-2 \eta_t \langle \bg(\bx_t,\xi_t), \bx_t- \bx^\star\rangle\\
&\quad + \eta_t^2 \|\bg(\bx_t,\xi_t)\|^2.
\end{aligned}
\end{equation}

Taking the conditional expectation with respect to $\xi_1, \cdots, \xi_{t-1}$, we have that
\begin{equation}
\small
E_t[\langle \bg(\bx_t,\xi_t), \bx_t-\bx^\star \rangle]
=  \langle \nabla f(\bx_t), \bx_t-\bx^\star \rangle
\geq \delta_t,
\end{equation}
where in the inequality we used the fact that $f$ is convex.
Hence, summing over $t=1$ to $T$, we have
\begin{equation}
\small
\begin{aligned}
\E\left[\sum_{t=1}^T \eta_t \delta_t\right] 
\leq \frac{1}{2}\|\bx^\star-\bx_1\|^2 + \frac{1}{2}\E\left[\sum_{t=1}^T \eta_t^2 \|\bg(\bx_t,\xi_t)\|^2 \right].
\end{aligned}
\end{equation}

From Lemma~\ref{lemma:smooth} and Lemma~\ref{lemma:bounded_sum_squares}, when $\epsilon>0$ we have that
\begin{equation}
\small
\label{eq:convex_eq1}
\begin{aligned}
\left(1- \frac{4\alpha M}{\beta^{\nicefrac{1}{2}+\epsilon}}\right)\E\left[\sum_{t=1}^T \eta_t \delta_t\right] 
& \leq \frac{1}{2}\|\bx^\star-\bx_1\|^2 + \frac{\alpha^2}{4\epsilon \beta^{2\epsilon}} \\
& \quad + \frac{2\alpha^2}{\beta^{1+2\epsilon}} (1+\ln T) \sigma^2 .
\end{aligned}
\end{equation} 
On the other hand, when $\epsilon=0$ we have 
\begin{equation}
\small
\label{eq:convex_eq2_epsl0}
\begin{aligned}
& \left(1- \frac{4\alpha M}{\beta^{\nicefrac{1}{2}}}\right)\E\left[\sum_{t=1}^T \eta_t \delta_t\right] \\
&\quad \leq  \frac{1}{2} \| \bx_1 - \bx^{\star} \|^2 + \frac{2 \alpha^2}{\beta} (1+\ln T)\sigma^2 \\
&\qquad +\alpha^2 \ln \left(  \sqrt{\beta + 2T\sigma^2} + 2\sqrt{M} \E \left[ \sqrt{ \Delta} \right]\right).
\end{aligned}
\end{equation}
We can also lower bound the l.h.s. of \eqref{eq:convex_eq1} and \eqref{eq:convex_eq2_epsl0} with 
\begin{equation}
\small
\begin{aligned}
\small
&\E\left[ \sum_{t=1}^T \eta_t \delta_t\right] 
\geq \E\left[ \eta_T \Delta \right] 
\geq \frac{ \left( \E \left[  \Delta^{\nicefrac{1}{2}-\epsilon}\right] \right)^\frac{1}{\nicefrac{1}{2}-\epsilon} }{\left( \E\left[  (\frac{1}{\eta_T})^{\frac{\nicefrac{1}{2}-\epsilon}{\nicefrac{1}{2}+\epsilon}}\right] \right)^{\frac{\nicefrac{1}{2}+\epsilon}{\nicefrac{1}{2}-\epsilon}}},
\end{aligned}
\end{equation}
where the second inequality is due to H\"{o}lder's inequality, i.e. $\E [B^p] \geq \dfrac{\E[AB]^p}{\E[A^q]^{\nicefrac{p}{q}}}$, with $\frac{1}{p} = \frac{1}{2}-\epsilon$, $\frac{1}{q} = \frac{1}{2}+\epsilon$, $A= (\frac{1}{\eta_T})^{\frac{1}{p}}$, and $B = \left[\eta_T \Delta\right]^{\frac{1}{p}}$.
We also have
{\small
\begin{align}
&\frac{1}{\eta_T}
= \frac{1}{\alpha}\left(\beta+\sum_{t=1}^{T-1} \|\bg(\bx_t,\xi_t)\|^2\right)^{\nicefrac{1}{2}+\epsilon}\\
& \leq \frac{1}{\alpha}\left(\beta+2\sum_{t=1}^{T-1} \left(\|\nabla f(\bx_t)-\bg(\bx_t,\xi_t)\|^2+ \|\nabla f(\bx_t)\|^2\right)\right)^{\nicefrac{1}{2}+\epsilon} \\
& \leq \frac{1}{\alpha}\left(\beta+2\sum_{t=1}^{T-1} \big(\|\nabla f(\bx_t)-\bg(\bx_t,\xi_t)\|^2 + 2M \delta_t\big) \right)^{\nicefrac{1}{2}+\epsilon},
\end{align}
}
where in the first inequality we used the elementary inequality $\|\bx+\by\|^2 \leq 2\|\bx\|^2 +2 \|\by\|^2$ and Lemma~\ref{lemma:smooth} in the second one.

Define
{\small
\[
\gamma
=\frac{1}{\alpha(1-\frac{4\alpha M}{\beta^{\nicefrac{1}{2}+\epsilon}})} \big(\|\bx^\star-\bx_1\|^2  + \frac{4\alpha^2}{\beta^{1+2\epsilon}} (1+\ln T) \sigma^2 \big) + K,
\]
}
where $K$ will be defined in the following for the case $\epsilon=0$ and $\epsilon>0$.

When $\epsilon >0$, we have
{\small
\begin{align}
& \frac{1}{\gamma^\frac{\nicefrac{1}{2}-\epsilon}{\nicefrac{1}{2}+\epsilon}}\left(\E \left[\Delta^{\nicefrac{1}{2}-\epsilon}\right] \right)^\frac{1}{\nicefrac{1}{2}+\epsilon} 
\leq  \alpha^\frac{\nicefrac{1}{2}-\epsilon}{\nicefrac{1}{2}+\epsilon} \E \left[ \left(\frac{1}{\eta_T}\right)^{\frac{\nicefrac{1}{2}-\epsilon}{\nicefrac{1}{2}+\epsilon}}\right] \\
& \leq \E \Bigg[ \big( \beta+2\sum_{t=1}^{T-1} (\|\nabla f(\bx_t)-\bg(\bx_t,\xi_t)\|^2  + 2M \delta_t) \big)^{\nicefrac{1}{2}-\epsilon}  \Bigg] \\
& \leq \E  \left[ \left(\beta+2\sum_{t=1}^{T-1} \|\nabla f(\bx_t)-\bg(\bx_t,\xi_t)\|^2\right)^{\nicefrac{1}{2}-\epsilon} \right] \\
& \quad + \E \left[ \left(4M \sum_{t=1}^{T-1} \delta_t \right)^{\nicefrac{1}{2}-\epsilon} \right]  \\
& \leq \left(\beta+2(T-1) \sigma^2 \right)^{\nicefrac{1}{2}-\epsilon} +  (4M)^{\nicefrac{1}{2}-\epsilon}\E \left[ \Delta^{\nicefrac{1}{2}-\epsilon} \right], \label{eq:convex_eq3}
\end{align}
}
where in the third inequality we used Lemma~\ref{lemma: exponential} and we define $K=\frac{\frac{\alpha^2}{2\epsilon \beta^{2\epsilon}}}{\alpha(1-\frac{4\alpha M}{\beta^{\nicefrac{1}{2}+\epsilon}})}$.
Proceeding in the same way, for the case $\epsilon=0$ we get
{\small
\begin{align}
\left( \E \left[  \sqrt{\Delta}\right]\right)^2 
& \leq \left(A +B\E \left[  \sqrt{\Delta}\right] \right) \\ 
& \quad \times \left( C + D \ln \left(A + B \E \left[  \sqrt{\Delta}\right] \right) \right),
\end{align}
}
where 
$A = \sqrt{\beta + 2T \sigma^2}$, 
$B = 2\sqrt{M}$, 
$D = \frac{\alpha}{1-\frac{4\alpha M}{\sqrt{\beta}}}$ and 
$C = \frac{\beta \| \bx_1 -\bx^{\star} \|^2 + 4\alpha^2 (1+\ln T)\sigma^2}{2\alpha \beta (1-\frac{4\alpha M }{\sqrt{\beta}})}$.
Using Lemma~\ref{lemma:logsolvex}, we have that 
{\small
\[
\E \left[  \sqrt{\Delta }\right] 
\leq 32 B^3 D^2 + 2 B C + 8 B^2 D \sqrt{ C} + \frac{A}{B}.
\]
}
We use this upper bound in the logarithmic term, so that for $\epsilon\geq0$, we have \eqref{eq:convex_eq3} again, this time with $K=D \ln(2A + 32 B^4 D^2 + 2 B^2 C + 8 B^3 D \sqrt{C}) = O(\frac{\ln T}{1-\frac{4\alpha M}{\sqrt{\beta}}})$.

Hence, we proceed using Lemma~\ref{lemma:solvex} to have for $\epsilon\geq0$
{\small
\begin{align}
&\E\left[ \Delta^{\nicefrac{1}{2}-\epsilon} \right] \\
&\leq  \max \left(  2^{\frac{\nicefrac{1}{2}+\epsilon}{\nicefrac{1}{2}-\epsilon}}(4M)^{\nicefrac{1}{2}+\epsilon}\gamma  ,  2^{\nicefrac{1}{2}+\epsilon}\gamma^{\nicefrac{1}{2}-\epsilon} \left( \beta+2T\sigma^2  \right)^{\nicefrac{1}{4}-\epsilon^2}  \right).
\end{align}
}
Using Jensen's inequality on the l.h.s. of last inequality concludes the proof.
\end{proof}

\subsection{Adaptive Convergence for Non-Convex Functions}
\label{sec:adapt}

We now prove that the generalized AdaGrad stepsizes in \eqref{eq:eta} allow a faster convergence of the gradients to zero when the noise over the gradients is small.

Given that SGD is not a descent method, we are not aware of any result of convergence with an explicit rate for the last iterate for non-convex functions. Hence, here we will prove a convergence guarantee for the \emph{best iterate} over $T$ iterations rather than for the \emph{last one}.
Note that choosing a random stopping time as in \citet{Ghadimi13} would be equivalent in expectation to choose the best iterate. For simplicity, we choose to state the theorem for the best iterate.
\begin{theorem}
\label{thm:sgd_adaptive}
Assume (\textbf{H1}, \textbf{H3}, \textbf{H4'}). Let the stepsizes set as \eqref{eq:eta}, where $\alpha,\beta>0$, $\epsilon \in(0,\frac{1}{2})$, and $2\alpha M<\beta^{\frac12+\epsilon}$.
Then, the iterates of SGD satisfies the following bound
{\small
\begin{align}
& \E \left[ \min_{1\leq t \leq T} \|\nabla f(\bx_t)\|^{1-2\epsilon}\right]  \\
& \leq \frac{1}{T^{\nicefrac{1}{2}-\epsilon}} \max \left(2^{\frac{\nicefrac{1}{2}+\epsilon}{\nicefrac{1}{2}-\epsilon}}\gamma,
 2^{\nicefrac{1}{2}+\epsilon} \left( \beta+2T\sigma^2 \right)^{\nicefrac{1}{4}-\epsilon^2}\gamma^{\nicefrac{1}{2}-\epsilon} \right),
\end{align}
}
where 
$
\gamma = \begin{cases}
O \left( \frac{1+\alpha^2 \ln T}{\alpha (1-\frac{2\alpha}{\sqrt{\beta}})} \right) & \text{for } \epsilon =0\\
O \left(\frac{1+ \alpha^2 (\frac{1}{\epsilon}+\sigma^2 \ln T)}{\alpha (1-\frac{2\alpha}{\beta^{\nicefrac{1}{2}+\epsilon}})} \right)& \text{for } \epsilon >0 .
\end{cases}
$

\end{theorem}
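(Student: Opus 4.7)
The plan is to closely mirror the proof of Theorem~\ref{thm:convex}, with the convex optimality gap $\delta_t$ replaced throughout by the squared gradient norm $\|\nabla f(\bx_t)\|^2$, and with the final Jensen step on $f$ replaced by the elementary averaging bound $\min_{1\le t\le T}\|\nabla f(\bx_t)\|^2 \le \tfrac{1}{T}\sum_t\|\nabla f(\bx_t)\|^2$. One welcome simplification relative to the convex setting is that Lemma~\ref{lemma:smooth} is no longer needed, because Lemma~\ref{lemma:basic_lemma} already yields a bound directly on $\E[\sum_t \eta_t\|\nabla f(\bx_t)\|^2]$.

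First I substitute Lemma~\ref{lemma:bounded_sum_squares} into Lemma~\ref{lemma:basic_lemma} and move the $\E[\sum_t \eta_t \|\nabla f(\bx_t)\|^2]$ contributions to the left-hand side. The resulting coefficient $1-\tfrac{2\alpha M}{\beta^{\nicefrac{1}{2}+\epsilon}}$ is strictly positive by the hypothesis $2\alpha M<\beta^{\nicefrac{1}{2}+\epsilon}$, so dividing through gives $\E[\sum_{t=1}^T \eta_t\|\nabla f(\bx_t)\|^2] \le \alpha\gamma$, with $\gamma$ of exactly the form stated in the theorem (the $\tfrac{1}{\epsilon}$ term in the numerator of $\gamma$ absorbs $K=\alpha^2/(2\epsilon\beta^{2\epsilon})$ from Lemma~\ref{lemma:bounded_sum_squares}).

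Next I set $G := \sum_{t=1}^T \|\nabla f(\bx_t)\|^2$ and apply exactly the H\"older step used in the convex proof, with exponents $p = 1/(\tfrac{1}{2}-\epsilon)$ and $q = 1/(\tfrac{1}{2}+\epsilon)$, to lower bound
\[
\E\Bigl[\sum_t \eta_t\|\nabla f(\bx_t)\|^2\Bigr] \ge \E[\eta_T G] \ge \frac{(\E[G^{\nicefrac{1}{2}-\epsilon}])^{1/(\nicefrac{1}{2}-\epsilon)}}{(\E[(1/\eta_T)^{(\nicefrac{1}{2}-\epsilon)/(\nicefrac{1}{2}+\epsilon)}])^{(\nicefrac{1}{2}+\epsilon)/(\nicefrac{1}{2}-\epsilon)}}.
\]
To control the denominator I use $\|\bg(\bx_t,\xi_t)\|^2 \le 2\|\nabla f(\bx_t)-\bg(\bx_t,\xi_t)\|^2 + 2\|\nabla f(\bx_t)\|^2$ inside the definition of $1/\eta_T$, invoke Lemma~\ref{lemma: exponential} to split the power $\nicefrac{1}{2}-\epsilon$ across the two summands, and then take expectations, using (\textbf{H4'}) together with Jensen on the concave map $z\mapsto z^{\nicefrac{1}{2}-\epsilon}$ to replace the noise term by $(\beta+2T\sigma^2)^{\nicefrac{1}{2}-\epsilon}$.

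Chaining these bounds yields the self-bounding inequality $y \le \gamma^{\nicefrac{1}{2}-\epsilon}(A+By)^{\nicefrac{1}{2}+\epsilon}$ with $y := \E[G^{\nicefrac{1}{2}-\epsilon}]$, $A := (\beta+2T\sigma^2)^{\nicefrac{1}{2}-\epsilon}$, and $B := 2^{\nicefrac{1}{2}-\epsilon}$. Applying Lemma~\ref{lemma:solvex} bounds $y$ by the maximum of the two quantities appearing in the theorem. Dividing by $T^{\nicefrac{1}{2}-\epsilon}$ and using $\min_t \|\nabla f(\bx_t)\|^{1-2\epsilon} \le (G/T)^{\nicefrac{1}{2}-\epsilon}$ together with Jensen on $z\mapsto z^{\nicefrac{1}{2}-\epsilon}$ concludes. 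The only real difficulty is tracking constants through the self-bounding step: once $\gamma$ is defined so that $\E[\sum_t \eta_t \|\nabla f(\bx_t)\|^2]\le \alpha\gamma$, the $\alpha$ factors arising in the H\"older step and in the upper bound on $\E[(1/\eta_T)^{(\nicefrac{1}{2}-\epsilon)/(\nicefrac{1}{2}+\epsilon)}]$ must cancel cleanly to leave the $\gamma^{\nicefrac{1}{2}-\epsilon}$ coefficient required by Lemma~\ref{lemma:solvex}.
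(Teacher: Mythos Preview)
Your proposal is correct and follows essentially the same route as the paper's proof: combine Lemma~\ref{lemma:basic_lemma} with Lemma~\ref{lemma:bounded_sum_squares}, rearrange using $2\alpha M<\beta^{\nicefrac{1}{2}+\epsilon}$, lower bound $\E[\sum_t \eta_t\|\nabla f(\bx_t)\|^2]$ via the same H\"older step, upper bound $1/\eta_T$ by splitting noise and gradient contributions with Lemma~\ref{lemma: exponential}, and close with Lemma~\ref{lemma:solvex} and the averaging bound $\min_t\|\nabla f(\bx_t)\|^{1-2\epsilon}\le (G/T)^{\nicefrac{1}{2}-\epsilon}$. Two minor remarks: the paper also carries out the $\epsilon=0$ case separately via Lemma~\ref{lemma:logsolvex} (hence the two-case definition of $\gamma$), which you omit but which is outside the stated hypothesis $\epsilon\in(0,\tfrac12)$ anyway; and your coefficient $B=2^{\nicefrac{1}{2}-\epsilon}$ is actually a hair tighter than the paper's $B=2$, since Lemma~\ref{lemma: exponential} gives $(2G)^{\nicefrac{1}{2}-\epsilon}=2^{\nicefrac{1}{2}-\epsilon}G^{\nicefrac{1}{2}-\epsilon}$.
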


\paragraph{Remark.}
As in the previous Section, using Markov's inequality it's easy to get that, with probability at least $1-\delta$, 
{\small
\begin{align}
& \min_{1 \leq t \leq T} \| \nabla f(\bx_t) \|^2 \\
& \leq \frac{1}{\delta^{\frac{1}{\nicefrac{1}{2}-\epsilon}} T} \max \left( 2^{\nicefrac{1}{2}+\epsilon} \gamma^{\frac{1}{\nicefrac{1}{2}-\epsilon}}, 
2^{\frac{\nicefrac{1}{2}+\epsilon}{\nicefrac{1}{2}-\epsilon}} (\beta + 2T \sigma^2)^{\nicefrac{1}{2}+\epsilon} \gamma \right).
\end{align}
}

This theorem mirrors Theorem~\ref{thm:convex}, proving again a convergence rate that is adaptive to the noise level. Hence, the same observations on adaptation to the noise level and convergence hold here as well. The main difference w.r.t. Theorem~\ref{thm:convex} is that here we only prove that the gradients are converging to zero rather than the suboptimality gap, because we do not assume convexity.

Note that such bounds were already known with an oracle tuning of the stepsizes, in particular with the knowledge of the variance of the noise, see, e.g., \citet{Ghadimi13}. In fact, the required stepsize in the deterministic case must be constant, while it has to be of the order of $O(\frac{1}{\sigma\sqrt{t}})$ in the stochastic case. However, here we obtain the same behavior automatically, without having to estimate the variance of the noise, thanks to the adaptive stepsizes. This shows for the first time a clear advantage of the global generalized AdaGrad stepsizes over plain SGD.

\begin{proof}[Proof of Theorem~\ref{thm:sgd_adaptive}]
For simplicity, denote by $\Delta:=\sum_{t=1}^T \|\nabla f(\bx_t)\|^2$.

From Lemma~\ref{lemma:basic_lemma}, we have
{\small
\[
\sum_{t=1}^T \E[\eta_t \|\nabla f(\bx_t)\|^2]
\leq f(\bx_1)-f^\star + \frac{M}{2} \E\left[\sum_{t=1}^T \eta_t^2 \|\bg(\bx_t,\xi_t)\|^2\right].
\]
}

Using Lemma~\ref{lemma:bounded_sum_squares}, we can upper bound the expected sum in the r.h.s. of last inequality. When $\epsilon >0$,  we have
{\small
\begin{align}
& \left( \frac{1}{1-\frac{2 \alpha M}{\beta^{\nicefrac12+\epsilon}}} \right) \E\left[\sum_{t=1}^T \eta_t \|\nabla f(\bx_t)\|^2\right] 
\leq f(\bx_1)-f^\star \nonumber \\
&\quad +\frac{\alpha^2 M}{4\epsilon \beta^{2\epsilon}} + \frac{2\alpha^2 \sigma^2 M}{\beta^{1+2\epsilon}} (1+\ln T). \label{eq:sgd_adaptive_eq1}
\end{align}
}

When $\epsilon=0$, we have
{\small
\begin{align}
& \left( \frac{1}{1-\frac{2\alpha M}{\sqrt{\beta}}} \right) \E\left[\sum_{t=1}^T \eta_t \|\nabla f(\bx_t)\|^2\right]  
\leq f(\bx_1)-f^{\star} \nonumber \\
& \quad + M\alpha^2 \ln \left( \sqrt{\beta + 2T\sigma^2 }  + \sqrt{2} \E\left[ \sqrt{ \sum_{t=1}^T \eta_t \|\nabla f(\bx_t)\|^2 } \right] \right)  \nonumber \\
&\quad + \frac{2\alpha M }{\beta} (1+\ln T)\sigma^2. \label{eq:sgd_adaptive_eq2}
\end{align}
}

With similar methods in the proof of Theorem~\ref{thm:convex}, we lower bound the l.h.s. of  both \eqref{eq:sgd_adaptive_eq1} and \eqref{eq:sgd_adaptive_eq2} with 
{\small
\begin{align*}
\E\left[ \sum_{t=1}^T \eta_t \|\nabla f(\bx_t)\|^2\right]
&\geq \E\left[ \eta_T \Delta\right] 
= \E\left[ \eta_T \Delta \right] \\
&\geq \frac{ \left( \E \left[ \Delta^{\nicefrac{1}{2}-\epsilon} \right] \right)^\frac{1}{\nicefrac{1}{2}-\epsilon} }{\left( \E \left[ (\frac{1}{\eta_T})^{\frac{\nicefrac{1}{2}-\epsilon}{\nicefrac{1}{2}+\epsilon}} \right] \right)^{\frac{\nicefrac{1}{2}+\epsilon}{\nicefrac{1}{2}-\epsilon}}}.
\end{align*}
}
We also have 
{\small
\begin{align*}
& \frac{1}{\eta_T} = \frac{1}{\alpha}\left(\beta+\sum_{t=1}^{T-1} \|\bg(\bx_t,\xi_t)\|^2\right)^{\nicefrac{1}{2}+\epsilon} \\
& \leq \frac{1}{\alpha}\left(\beta+2\sum_{t=1}^{T-1} \left(\|\nabla f(\bx_t)-\bg(\bx_t,\xi_t)\|^2+ \|\nabla f(\bx_t)\|^2\right)\right)^{\nicefrac{1}{2}+\epsilon}.
\end{align*}
}

Define 
{\small
\[
\gamma = \frac{1}{\alpha (1-\frac{2\alpha M}{\beta^{\nicefrac{1}{2}+\epsilon}})} \left( f(\bx_1)-f^{\star}+ \frac{2\alpha^2 M }{\beta^{1+2\epsilon}}\sigma^2 \right) + K,
\]
}
where $K$ will be defined separately for the case $\epsilon = 0$ and $\epsilon >0.$

When $\epsilon >0$, we have 
{\small
\begin{align}
& \left( \E \left[ \Delta^{\nicefrac{1}{2}-\epsilon} \right] \right)^\frac{1}{\nicefrac{1}{2}-\epsilon} \\
& \leq  \alpha \gamma \left( \E \left[ (\frac{1}{\eta_T})^{\frac{\nicefrac{1}{2}-\epsilon}{\nicefrac{1}{2}+\epsilon}} \right] \right)^{\frac{\nicefrac{1}{2}+\epsilon}{\nicefrac{1}{2}-\epsilon}} \\
& \leq \gamma \left( \E  \left[ \left(\beta+2\sum_{t=1}^{T-1} \|\nabla f(\bx_t)-\bg(\bx_t,\xi_t)\|^2\right)^{\nicefrac{1}{2}-\epsilon} \right] \right.\\
& \quad \left. + 2\E \left[ \left(\sum_{t=1}^{T-1} \|\nabla f(\bx_t)\|^2 \right)^{\nicefrac{1}{2}-\epsilon} \right] \right)^{\frac{\nicefrac{1}{2}+\epsilon}{\nicefrac{1}{2}-\epsilon}} \\
& \leq \gamma \left( \left(\beta+2T \sigma^2 \right)^{\nicefrac{1}{2}-\epsilon} + 2\E \left[ \Delta^{\nicefrac{1}{2}-\epsilon}\right]  \right)^{\frac{\nicefrac{1}{2}+\epsilon}{\nicefrac{1}{2}-\epsilon}}. \label{eq:adapt_eq3}
\end{align}
}
where in this case we define $K = \frac{\frac{\alpha^ M }{4\epsilon \beta^{2\epsilon}}}{\alpha(1-\frac{2\alpha M}{\beta^{\nicefrac{1}{2}+\epsilon}})}.$
Proceeding in the same way, when $\epsilon=0$, we have 
{\small
\begin{align}
\left( \E \left[ \sqrt{\Delta } \right] \right)^2 
&\leq \left(A + B \E \left[ \sqrt{\Delta } \right]  \right) \\
& \qquad \times \left(  C + D \ln \left(A +B \E \left[ \sqrt{\Delta } \right] \right) \right),
\end{align}
}
where $A = \sqrt{\beta + 2T \sigma^2}$, 
$B = \sqrt{2}$,
$D = \frac{\alpha M}{1-\frac{2\alpha M}{\sqrt{\beta}}}$, 
$C = \frac{\beta (f(\bx_1)-f^{\star}) + 2\alpha (1+\ln T)\sigma^2}{\alpha \beta (1-\frac{2\alpha M }{\sqrt{\beta}})}$. 

Using Lemma~\ref{lemma:logsolvex}, we have that  
{\small
\begin{align}
\E \left[ \sqrt{\Delta } \right] 
\leq 32 B^3 D^2 + 2 B C + 8 B^2 D \sqrt{ C} + \frac{A}{B}.
\end{align}
}
Similar with Theorem~\ref{thm:convex}, we use this upper bound in the logarithmic term so that for $\epsilon \geq 0$, we have \eqref{eq:adapt_eq3} again, this time with $K = D \ln (2A + 32 B^4 D^2 + 2 B^2 C +8 B^3 D \sqrt{C}) = O(\frac{\ln T}{1-\frac{2\alpha M }{\beta}}).$

Hence, we proceed using Lemma~\ref{lemma:solvex} to have for $\epsilon \geq 0$
{\small
\begin{align}
\E&\left[ \Delta^{\nicefrac{1}{2}-\epsilon} \right] \\
& \leq \max \left(2^{\frac{\nicefrac{1}{2}+\epsilon}{\nicefrac{1}{2}-\epsilon}} \gamma,
 2^{\nicefrac{1}{2}+\epsilon} \left( \beta+2T \sigma^2 \right)^{\nicefrac{1}{4}-\epsilon^2}\gamma^{\nicefrac{1}{2}-\epsilon} \right).
\end{align}
}
Lower bounding $\E \left[ \Delta^{\nicefrac{1}{2}-\epsilon} \right]$ by $T^{\nicefrac{1}{2}-\epsilon} \E \left[\min_{1\leq t \leq T} \|\nabla f(\bx_t)\|^{1-2\epsilon}\right]$,  we have the stated bound. 
\end{proof}
\section{DISCUSSION AND FUTURE WORK}

We have presented an analysis of adaptive stepsizes based on the generalized AdaGrad stepsizes for stochastic gradient descent, with convex and non-convex functions.
In the convex setting, our result shows an adaptive convergence rate, also overcoming the limitations of previous results. In the non-convex setting, we show almost sure convergence and adaptive convergence rates. Moreover, we show for the first time sufficient condition for a convergence guarantee for non-convex functions for a minor variation of AdaGrad.

We believe these results have twofold importance. First, we go in the direction of closing the gap between theory and practice for widely used optimization algorithms. Second, our adaptive rates provide a possible explanation for the empirical success of these kinds of algorithms in practical machine learning applications. 

One of the limitations of the current analysis is the fact that our analysis implies high probability bounds that depends polynomially on $\frac{1}{\delta}$, due to the application of Markov's inequality. It would be better to prove bounds that depend on $\ln (\frac{1}{\delta})$, as they are possible for SGD under conditions \eqref{eq:conditions_stepsize}.  However, the generalized AdaGrad updates do not satisfy the conditions \eqref{eq:conditions_stepsize} and the analysis is not straightforward. Our future work will focus on shedding light on this issue.

In the future, we would also like to understand if the conditions we impose can be weakened. For example, the almost sure convergence requires a bounded support noise, that, while it might be verified in many practical scenarios, still seems unsatisfying from a theoretical point of view. Moreover, we would like to adapt the recent approaches for parameter-free online optimization~\citep{OrabonaP16b,CutkoskyO18} to the non-convex setting.

\subsubsection*{Acknowledgements}
The authors thank D\'avid P\'al for the comments and discussions and L\'eon Bottou for the comments on prior work. This material is based upon work supported by the National Science Foundation under grant no. 1740762 ``Collaborative Research: TRIPODS Institute for Optimization and Learning'' and by a Google Research Award.

\bibliographystyle{plainnat}
\bibliography{../learning}

\newpage
\onecolumn
\appendix
\section{Appendix}

Here, we report the proofs missing from the main text.

\subsection{Details of Example~\ref{ex:ninety}}

Consider the function $f(x) = \frac{1}{2} x^2$. The gradient in $t$-th iteration is $\nabla f(x_t) = x_t$. 
Let the stochastic gradient be defined as $\bg_t = \nabla f(x_t) + \xi_t$, where $P(\xi_t = \sigma_t) =\frac{7}{15}$, $ P(\xi_t = -\frac{3}{2} \sigma_t) = \frac{1}{5}$
and $ P(\xi_t = -\frac{1}{2} \sigma_t) = \frac{1}{3}$. 

Let $A \triangleq \sum_{i=1}^{t-1} g_i^2+\beta$. Then
\begin{equation*}
\langle \E_t \eta_{t+1} \bg_t , \nabla f(x_t) \rangle 
= \alpha \left[ \frac{7}{15}\frac{(x_t +\sigma_t) x_t}{[A+(x_t+\sigma_t)^2]^{\frac{1}{2}+\epsilon}} + \frac{1}{5}\frac{(x_t -\frac{3}{2}\sigma_t) x_t}{[A+(x_t-\frac{3}{2}\sigma_t)^2]^{\frac{1}{2}+\epsilon}}+\frac{1}{3}\frac{(x_t -\frac{1}{2}\sigma_t) x_t}{[A+(x_t-\frac{1}{2}\sigma_t)^2]^{\frac{1}{2}+\epsilon}} \right].
\end{equation*}
This expression can be negative, for example, setting $x_t=1$, $\sigma_t = 10$, $A=10$, $\epsilon=0$ or $\epsilon = 0.1$. 

\subsection{Proof of Lemma~\ref{lemma:sum_bounded}}

\begin{lemma}
\label{lemma:sum_integral_bounds}
Let $a_i\geq0, \cdots, T$ and $f:[0,+\infty)\rightarrow [0, +\infty)$ nonincreasing function.
Then
\begin{align*}
\sum_{t=1}^T a_t f\left(a_0+\sum_{i=1}^{t} a_i\right) 
&\leq \int_{a_0}^{\sum_{t=0}^T a_t} f(x) dx.
\end{align*}
\end{lemma}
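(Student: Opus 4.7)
The plan is to obtain the bound interval-by-interval and then telescope. Set $S_t \defeq a_0 + \sum_{i=1}^t a_i$, so that $S_0 = a_0$ and the right-hand side of the claim is $\int_{S_0}^{S_T} f(x)\,dx$. Since $f$ is nonincreasing on $[0,+\infty)$ and $S_t$ is the right endpoint of the interval $[S_{t-1}, S_t]$, for every $x \in [S_{t-1}, S_t]$ we have $f(x) \geq f(S_t)$. Integrating this pointwise inequality over the interval, whose length is $S_t - S_{t-1} = a_t$, yields the single-term estimate
\begin{equation}
\int_{S_{t-1}}^{S_t} f(x)\,dx \;\geq\; a_t\, f(S_t).
\end{equation}

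Summing this inequality over $t = 1, \dots, T$, the left-hand side telescopes to $\int_{S_0}^{S_T} f(x)\,dx$, while the right-hand side is precisely $\sum_{t=1}^T a_t f\bigl(a_0+\sum_{i=1}^{t} a_i\bigr)$, which is the quantity we wish to bound. This gives the lemma.

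The only minor technicalities to check are integrability and the degenerate case. A nonnegative nonincreasing function on a compact interval is Riemann integrable (its set of discontinuities is at most countable), so the integrals above are well defined. If some $a_t = 0$, the corresponding interval $[S_{t-1}, S_t]$ collapses to a point and both sides of the single-term estimate vanish, so the telescoping argument is unaffected. I do not anticipate any real obstacle here — the proof is essentially the standard observation that $\sum_{t=1}^T a_t f(S_t)$ is a lower Riemann sum for $\int_{S_0}^{S_T} f$ when $f$ is nonincreasing.
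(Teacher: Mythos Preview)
Your proof is correct and essentially identical to the paper's: both define the partial sums $S_t$, use monotonicity of $f$ to bound $a_t f(S_t) \leq \int_{S_{t-1}}^{S_t} f(x)\,dx$, and sum over $t$. Your additional remarks on integrability and the degenerate case $a_t=0$ are fine but not present in the paper's terser version.
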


\begin{proof}
Denote by $s_t=\sum_{i=0}^{t} a_i$.
\begin{align*}
a_i f(s_i) 
=  \int_{s_{i-1}}^{s_i} f(s_i) d x 
\leq \int_{s_{i-1}}^{s_i} f(x) d x.
\end{align*}
Summing over $i=1, \cdots, T$, we have the stated bound.
\end{proof}

\begin{proof}[Proof of Lemma~\ref{lemma:sum_bounded}]
The proof is immediate from Lemma~\ref{lemma:sum_integral_bounds}.
\end{proof}

\subsection{Proofs of Section~\ref{sec:convex}}

\begin{proof}[Proof of Lemma~\ref{lemma:smooth}]
From \eqref{eq:smooth2}, for any $\bx,\by \in\R^d$, we have
\[
f(\bx+\by)\leq f(\bx)+\langle \nabla f(\bx), \by\rangle+ \frac{M}{2}\|\by\|^2.
\]
Take $\by=-\frac{1}{M} \nabla f(\bx)$, to have
\[
f(\bx+\by)\leq f(\bx)+\left(\frac{1}{2M}-\frac{1}{M}\right)\| \nabla f(\bx)\|^2.
\]
Hence,
\[
\|\nabla f(\bx) \|^2
\leq 2M(f(\bx)-f(\bx+\by))
\leq 2M(f(\bx)-\min_{\bu} f(\bu)). \qedhere
\]
\end{proof}
\begin{proof}[Proof of Lemma~\ref{lemma:solvex}]
If $A \leq Bx$, then $x \leq C(2Bx)^{\frac{1}{2}+\epsilon}$, so $x \leq \left[ C (2B)^{\frac{1}{2}+\epsilon} \right]^{\frac{1}{1/2-\epsilon}}$. 
And if $A > Bx$, then $x < C(2A)^{\frac{1}{2}+\epsilon}$. 
Taking the maximum of the two cases, we have the stated bound.
\end{proof}

\begin{proof}[Proof of Lemma~\ref{lemma:logsolvex}]
Assume that $B x > A$. We have that
\[
x^2 
\leq (A + B x) (C + D \ln (A+B x))
< 2 B x (C + D \ln (2B x))
< 2 B x (C + 2D\sqrt{2Bx}), 
\]
that is
\[
x 
< 2 B C + 4BD\sqrt{2Bx}.
\]
We can solve this inequality, to obtain
\[
x < 32 B^3 D^2 + 2 B C + 8B^2 D \sqrt{C}.
\]
On the other hand, if $B x\leq A$, we have $x\leq \frac{A}{B}$.
Taking the sum of these two case, we have the stated bound.
\end{proof}

\begin{proof}[Proof of Lemma~\ref{lemma: exponential}]
Let $f(x) = (x+y)^p-x^p-y^p$. We can see that $f'(x) = p(x+y)^{p-1}-px^{p-1} \leq 0$ when $x,y \geq 0$. So $f(x) \leq f(0) = 0$. The inequality holds. 
\end{proof}

\begin{lemma}
\label{lemma: bound log}
If $x>0$, $\alpha >0 $, then $\ln(x) \leq \alpha(x^{\frac{1}{\alpha}}-1)$. 
\end{lemma}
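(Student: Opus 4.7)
The plan is to reduce the claim to the elementary inequality $\ln(y) \leq y-1$ for all $y>0$, which is a standard consequence of the concavity of $\ln$ and the fact that the tangent line to $\ln$ at $y=1$ is $y-1$. Substituting $y = x^{1/\alpha}$ (valid since $x>0$ and $\alpha>0$, so $y>0$) gives $\ln(x^{1/\alpha}) \leq x^{1/\alpha}-1$, and multiplying both sides by $\alpha>0$ yields $\ln(x) = \alpha \ln(x^{1/\alpha}) \leq \alpha(x^{1/\alpha}-1)$, which is exactly the claim.

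Alternatively, if one wishes to avoid invoking $\ln(y)\leq y-1$ as a known fact, I would set $g(x) := \alpha(x^{1/\alpha}-1) - \ln(x)$ and verify $g(x)\geq 0$ directly by calculus. Differentiating gives $g'(x) = x^{1/\alpha - 1} - 1/x = x^{-1}(x^{1/\alpha}-1)$, which is negative for $0<x<1$ and positive for $x>1$, so $g$ attains its global minimum at $x=1$. Since $g(1) = \alpha(1-1) - \ln(1) = 0$, we conclude $g(x)\geq 0$ for all $x>0$, which is the stated inequality.

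There is no real obstacle here: the main step is just recognizing the substitution $y=x^{1/\alpha}$. I would present the one-line substitution proof for brevity, since $\ln(y)\leq y-1$ is entirely standard and the resulting argument is the cleanest.
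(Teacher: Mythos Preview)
Your proposal is correct, and your alternative calculus argument is exactly the paper's proof (the paper defines $f(x)=\ln(x)-\alpha x^{1/\alpha}+\alpha=-g(x)$ and shows $f(x)\le f(1)=0$ by the same derivative computation). Your substitution $y=x^{1/\alpha}$ is just a slightly slicker packaging of the same idea, reducing to the standard inequality $\ln(y)\le y-1$.
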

\begin{proof}[Proof of Lemma~\ref{lemma: bound log}]
Let $f(x) = \ln(x)-\alpha x^{\frac{1}{\alpha}}+\alpha$. $f'(x)= \frac{1}{x}-x^{\frac{1}{\alpha}-1}$ is positive when $0<x<1$, $f'(1)=0$ and $f'(x) <0$ when $x>1$. So 
$f(x) \leq f(1)=0.$ The inequality holds. 
\end{proof}
\begin{proof}[Proof of Lemma~\ref{lemma:bounded_sum_squares}]

Using the assumption on the noise, we have
\begin{align*}
\exp&\left(\frac{\E\left[\max_{1 \leq  i\leq T} \|\nabla f(\bx_i) -\bg(\bx_i,\xi_i)\|^2\right]}{\sigma^2}\right) 
\leq \E\left[\exp\left(\frac{\max_{1 \leq  i\leq T} \|\nabla f(\bx_i) -\bg(\bx_i,\xi_i)\|^2}{\sigma^2}\right)\right] \\
&= \E\left[\max_{1 \leq  i\leq T} \exp\left(\frac{\|\nabla f(\bx_i) -\bg(\bx_i,\xi_i)\|^2}{\sigma^2}\right)\right]
\leq \sum_{i=1}^T \E\left[\exp\left(\frac{\|\nabla f(\bx_i) -\bg(\bx_i,\xi_i)\|^2}{\sigma^2}\right)\right] \\
&= \sum_{i=1}^T \E\left[\E_i\left[\exp\left(\frac{\|\nabla f(\bx_i) -\bg(\bx_i,\xi_i)\|^2}{\sigma^2}\right)\right]\right] 
\leq T e,
\end{align*}
that implies
\begin{equation}
\label{eq:proof_lemma:bounded_sum_squares_eq1}
\E\left[\max_{1 \leq  i\leq T} \|\nabla f(\bx_i) -\bg(\bx_i,\xi_i)\|^2\right] \leq \sigma^2 (1+\ln T).
\end{equation}
Hence, when $\epsilon >0$, we have
\begin{align*}
\E\left[\sum_{t=1}^T \eta_t^2 \|\bg(\bx_t,\xi_t)\|^2\right]
&= \E\left[\sum_{t=1}^T \eta_{t+1}^2 \|\bg(\bx_t,\xi_t)\|^2 + \sum_{t=1}^T \|\bg(\bx_t,\xi_t)\|^2 (\eta_t^2 -\eta^2_{t+1})\right]\\
&= \E\left[\sum_{t=1}^T \eta_{t+1}^2 \|\bg(\bx_t,\xi_t)\|^2 + \sum_{t=1}^T \|\bg(\bx_t,\xi_t)\|^2 (\eta_t +\eta_{t+1})(\eta_t -\eta_{t+1})\right]\\
&\leq \E\left[\sum_{t=1}^T \eta_{t+1}^2 \|\bg(\bx_t,\xi_t)\|^2 + \sum_{t=1}^T 2 \eta_t \|\bg(\bx_t,\xi_t)\|^2 (\eta_t -\eta_{t+1})\right]\\
&\leq \frac{\alpha^2}{2\epsilon \beta^{2\epsilon}} + 2 \eta_1 \E\left[\max_{1 \leq t \leq T} \eta_t \|\bg(\bx_t,\xi_t)\|^2\right] \\
&\leq \frac{\alpha^2}{2\epsilon \beta^{2\epsilon}} + 4 \eta_1 \E\left[\max_{1 \leq t \leq T} \eta_t \left(\|\bg(\bx_t,\xi_t)-\nabla f(\bx_t)\|^2+\|\nabla f(\bx_t)\|^2\right)\right] \\
&\leq \frac{\alpha^2}{2\epsilon \beta^{2\epsilon}} + 4 \eta^2_1 (1+\ln T) \sigma^2 + 4 \eta_1 \E\left[\sum_{t=1}^T \eta_t \|\nabla f(\bx_t)\|^2\right] \\
&= \frac{\alpha^2}{2\epsilon \beta^{2\epsilon}} + \frac{4\alpha^2}{\beta^{1+2\epsilon}} (1+\ln T) \sigma^2 + \frac{4\alpha}{\beta^{\frac{1}{2}+\epsilon}} \E\left[\sum_{t=1}^T \eta_t \|\nabla f(\bx_t)\|^2\right], 
\end{align*}
where in second inequality we used Lemma~\ref{lemma:sum_bounded} and in fourth one we used \eqref{eq:proof_lemma:bounded_sum_squares_eq1}. Note that the analysis after the second inequality also holds when $\epsilon=0$. 

And when $\epsilon = 0$, we have 
\begin{align*}
& \E\left[ \sum_{t=1}^T \eta_{t+1}^2 \|\bg(\bx_t,\xi_t)\|^2 \right] 
=  \E \left[ \sum_{t=1}^T \frac{\alpha^2 \|\bg(\bx_t,\xi_t)\|^2 }{(\beta+\sum_{i=1}^t \| \bg( \bx_i,\xi_t) \|^2)} \right] \\
& \leq 2 \alpha^2 \E  \left[ \ln \left( \sqrt{\beta + \sum_{t=1}^T \|\bg(\bx_t,\xi_t)\|^2}  \right) \right] \\
& \leq 2 \alpha^2 \E \left[ \ln \left( \sqrt{\beta + 2 \sum_{t=1}^T \|\bg(\bx_t,\xi_t)-\nabla f(\bx_t)\|^2} +\sqrt{2 \sum_{t=1}^T \|\nabla f(\bx_t)\|^2}  \right)\right] \\
& \leq 2 \alpha^2  \ln \left( \sqrt{\beta + 2T\sigma^2 }+\sqrt{2} \E \left[ \sqrt{ \sum_{t=1}^T \|\nabla f(\bx_t)\|^2}  \right] \right) \\
\end{align*}
where in first inequality we used Lemma~\ref{lemma: bound log} and in the third one we used Jensen's inequality. Putting things together, we have 
\begin{align*}
& \E\left[\sum_{t=1}^T \eta_t^2 \|\bg(\bx_t,\xi_t)\|^2\right]
= \E\left[\sum_{t=1}^T \eta_{t+1}^2 \|\bg(\bx_t,\xi_t)\|^2 + \sum_{t=1}^T \|\bg(\bx_t,\xi_t)\|^2 (\eta_t^2 -\eta^2_{t+1})\right]\\
& \leq 2 \alpha^2  \ln \left( \sqrt{\beta + 2T\sigma^2 }+\sqrt{2} \E \left[ \sqrt{ \sum_{t=1}^T \|\nabla f(\bx_t)\|^2}  \right] \right) + \frac{4\alpha^2}{\beta} (1+\ln T) \sigma^2 + \frac{4\alpha}{\beta^{\frac{1}{2}}} \E\left[\sum_{t=1}^T \eta_t \|\nabla f(\bx_t)\|^2\right]
\end{align*}

\end{proof}

\subsection{Proofs of Section~\ref{sec:almost_sure}}

\begin{proof}[Proof of Lemma~\ref{lemma:basic_lemma}]
From \eqref{eq:smooth2}, we have
\begin{align}
f(\bx_{t+1}) 
&\leq f(\bx_t) + \langle \nabla f(\bx_t), \bx_{t+1}-\bx_t\rangle + \frac{M}{2}\|\bx_{t+1}-\bx_t\|^2 \\
&= f(\bx_t) + \langle \nabla f(\bx_t), \bta_t(\nabla f(\bx_t) - \bg(\bx_t,\xi_t))\rangle - \langle \nabla f(\bx_t), \bta_t \nabla f(\bx_t)\rangle + \frac{M}{2}\|\bta_t\bg(\bx_t,\xi_t)\|^2.
\end{align}
Taking the conditional expectation with respect to $\xi_1, \cdots, \xi_{t-1}$, we have that
\[
E_t[\langle \nabla f(\bx_t), \bta_t(\nabla f(\bx_t) - \bg(\bx_t,\xi_t)) \rangle]
=  \langle \nabla f(\bx_t), \bta_t \nabla f(\bx_t) - \bta_t \E_t[\bg(\bx_t,\xi_t)] \rangle
= 0.
\]
Hence, from the law of total expectation, we have
\[
\E\left[\langle \nabla f(\bx_t), \bta_t \nabla f(\bx_t)\rangle\right]
\leq \E\left[f(\bx_t)- f(\bx_{t+1}) + \frac{M}{2}\|\bta_t \bg(\bx_t,\xi_t)\|^2\right].
\]
Summing over $t=1$ to $T$ and lower bounding $f(\bx_{T+1})$ with $f^\star$, we have the stated bound.
\end{proof}

\begin{proof}[Proof of Lemma~\ref{lemma:remove_liminf}]
Since the series $\sum_{t=1}^{\infty}a_t$ diverges, given that $\sum_{t=1}^{\infty}a_t b_t$ converges, we necessarily have $\liminf_{t \rightarrow \infty}b_t = 0$. So there exists a subsequence $\{ b_{i(t)} \}$ of $\{ b_t \}$ such that $ \lim_{t \to \infty} b_{i(t)} =0 .$

Let us proceed by contradiction and assume that there exists some $\alpha> 0 $ and some other subsequence $ \{  b_{m(t)}\}$ of $\{ b_t \}$ such that $ b_{m(t)} \geq \alpha$ for all $t$. In this case, we can construct a third subsequence $\{ b_{j(t)}\}$ of $\{ b_t \}$ where the subindices $j(t)$ are chosen in the following way: 
\begin{equation}
\label{eq: indice_1}
j(0) = \min \{ l \geq 0:  b_l \geq \alpha \}
\end{equation}

and, given $j(2t)$,

\begin{equation}
\label{eq: indice_2}
j(2t+1) = \min \{ l \geq j(2t) : b_l \leq \frac{1}{2} \alpha \},
\end{equation}
\begin{equation}
\label{eq: indice_3}
j(2t+2) = \min \{ l \geq j(2t+1): b_l \leq \frac{1}{2} \alpha \}.
\end{equation}

Note that the existence of $\{ b_{i(t)} \}$ and $\{ b_{m(t)} \}$ guarantees that $j(t)$ is well defined. Also by $\eqref{eq: indice_2}$ and $\eqref{eq: indice_3}$
\[
b_l \leq \frac{\alpha}{2} \text{for } j(2t) \leq l \leq j(2t+1)-1.
\]
Then, denoting $\phi_t  = \sum_{l=2t}^{j(2t+1)-1} a_l $, we have 
\[
\infty > \sum_{t=1}^{\infty} a_t b_t \geq \sum_{t=1}^{\infty} \sum_{l=2t}^{j(2t+1)-1} a_l b_l \leq \frac{\alpha}{2} \sum_{t=1}^{\infty} \phi_t.
\]
Therefore, we have $\lim_{t \to \infty} \phi_t = 0.$

On the other hand, by $\eqref{eq: indice_2}$ and $\eqref{eq: indice_3}$, we have $b_{j(2t)} \geq \alpha$, $b_{j(2t+1)} \leq \frac{1}{\alpha}$, so that 
\[
\frac{\alpha}{2} \leq b_{j(2t)}-b_{j(2t+1)} = \sum_{l=j(2t)}^{j(2t+1)-1} (b_l - b_{l+1}) \leq \sum_{l=j(2t)}^{j(2t+1)-1} K a_l = K \phi_t. 
\]
So $\phi_t \geq \frac{\alpha}{2K}$, which is in contradiction with $\lim_{t \to \infty} \phi_t = 0.$
Therefore, $b_t$ goes to zero. 

\end{proof}

\begin{proof}[Proof of Theorem~\ref{thm:convergence_adagrad}]
We proceed similarly to the proof of Theorem~\ref{thm:convergence_sgd}, to get 
\[
\E\left[\sum_{t=1}^\infty \langle \nabla f(\bx_t), \bta_t \nabla f(\bx_t) \rangle\right] 
\leq f(\bx_1)- f(\bx^\star)+ \frac{M}{2}\E\left[\sum_{t=1}^\infty \|\bta_t \bg(\bx_t,\xi_t)\|_{2}^2\right].
\]
Observe that
\[
\sum_{t=1}^{\infty}  \|\bta_t \bg(\bx_t,\xi_t)\|^2 
= \sum_{t=1}^{\infty} \sum_{i=1}^d \eta_{t,i}^2 \bg(\bx_t,\xi_t)_i^2 
= \sum_{i=1}^d \sum_{t=1}^{\infty} \eta_{t,i}^2 \bg(\bx_t,\xi_t)_i^2
 < \infty,
\]
where the last inequality comes from the same reasoning in \eqref{eq:convergence_sgd_eq1}.
Hence, we have
\[
\E\left[\sum_{t=1}^{\infty} \langle \nabla f(\bx_t), \bta_t \nabla f(\bx_t)\rangle\right] < \infty.
\]
Hence, with probability 1, we have
\[
\sum_{t=1}^{\infty} \langle \nabla f(\bx_t), \bta_t \nabla f(\bx_t)\rangle 
= \sum_{t=1}^{\infty} \sum_{j=1}^d \eta_{t,j} \nabla f(\bx_t)_{j}^2 
= \sum_{j=1}^d \sum_{t=1}^{\infty} \eta_{t,j} \nabla f(\bx_t)_{j}^2
< \infty.
\]
and, for any $j = 1,\cdots,d$, 
\[
\sum_{t=1}^{\infty} \eta_{t,j} (\nabla f(\bx_t))_{j}^2 < \infty.
\]
Now, observe that the Lipschitzness of $f$ and the bounded support of the noise on the gradients gives
\begin{align*}
\sum_{t=1}^{\infty} \eta_{t,j}
= \sum_{t=1}^ {\infty} \frac{\alpha}{(\beta+\sum_{i=1}^{t-1} (g(\bx_i,\xi_i)_j)^2)^{\nicefrac12+\epsilon}}
\geq  \sum_{t=1}^ {\infty} \frac{\alpha}{(\beta+2(t-1)(L^2+S^2))^{\nicefrac12+\epsilon}}
= \infty.
\end{align*}
Using the fact the $f$ is $L$-Lipschitz and $M$-smooth, we also have
\begin{align*}
&\left| ((\nabla f(\bx_{t+1}))_j)^2 - ((\nabla f(\bx_t))_j)^2\right| 
 = ( (\nabla f(\bx_{t+1}))_j+ (\nabla f(\bx_t))_j) \cdot \left| (\nabla f(\bx_{t+1}))_j- (\nabla f(\bx_t))_j \right| \\
 &\quad\leq 2L M \|\bx_{t+1}-\bx_t\|
 = 2LM \|\bta_t \bg(\bx_t,\xi_t)\| 
 \leq 2LM(L+S) \eta_t.
\end{align*}
Hence, we case use Lemma~\ref{lemma:remove_liminf} to obtain 
\[
\lim_{t \to \infty} ((\nabla f(\bx_t))_j)^2 = 0.
\]

For the second statement, observe that, with probability 1,
\begin{align*}
\sum_{t=1}^\infty ((\nabla f(\bx_t))_j)^2 t^{\nicefrac12-\epsilon} \frac{\alpha}{t(2L^2+2S^2+\beta)^{\nicefrac12+\epsilon}} 
\leq \sum_{t=1}^\infty \eta_{t,j} (\nabla f(\bx_t))_j)^2 <\infty.
\end{align*}
Hence, noting that $\sum_{t=1}^\infty \frac{1}{t} =\infty$, we have that $\lim\inf_{t\rightarrow \infty} ((\nabla f(\bx_t))_j)^2 t^{\nicefrac12-\epsilon}=0$.
\end{proof}

\end{document}